\def\R{{\mathbb R}}
\def\E{{\mathbb E}}
\def\Pbb{{\mathbb P}}
\def\Xcal{{\mathcal X}}
\def\Pcal{{\mathcal P}}
\def\Hcal{{\mathcal H}}
\def\Gcal{{\mathcal G}}
\def\Vcal{{\mathcal V}}
\def\Ecal{{\mathcal E}}
\def\Scal{{\mathcal S}}
\def\Ycal{{\mathcal Y}}
\def\Dcal{{\mathcal D}}
\def\x{{\mathbf x}}
\def\z{{\mathbf z}}
\def\Mcal{{\mathcal M}}
\def\Ncal{{\mathcal N}}
\def\id{{\mathrm{id}}}
\def\KL{{\mathrm{KL}}}
\DeclareMathOperator*{\argmin}{arg\,min}
\DeclareMathOperator*{\argmax}{arg\,max}
\newcommand{\ie}{\textit{i.e., }}
\newcommand{\eg}{\textit{e.g., }}
\newcommand{\norm}[1]{\left\lVert#1\right\rVert}
\newcommand\DrawDot[3]{
  \draw node[#2,circle,fill=#2,inner sep=1.2pt,%
  label={[black]-45:{#3}}] at #1 {};
}
\theoremstyle{plain}
\newtheorem{theorem}{Theorem}[section]
\newtheorem{lemma}[theorem]{Lemma}
\theoremstyle{definition}
\theoremstyle{remark}
\icmltitlerunning{Fisher Information Embedding for Node and Graph Learning}
\begin{document}

\twocolumn[
\icmltitle{Fisher Information Embedding for Node and Graph Learning}

\icmlsetsymbol{equal}{*}

\begin{icmlauthorlist}
\icmlauthor{Dexiong Chen}{equal,eth,sib}
\icmlauthor{Paolo Pellizzoni}{equal,eth,sib}
\icmlauthor{Karsten Borgwardt}{eth,sib}
\end{icmlauthorlist}

\icmlaffiliation{eth}{Department of Biosystems Science and Engineering, ETH Zürich, Switzerland}
\icmlaffiliation{sib}{SIB Swiss Institute of Bioinformatics, Switzerland}

\icmlcorrespondingauthor{Dexiong Chen}{dechen@ethz.ch}
\icmlcorrespondingauthor{Paolo Pellizzoni}{ppellizzoni@ethz.ch}

\icmlkeywords{graph neural networks, graph kernel, node embedding, set pooling, attention mechanism}

\vskip 0.3in
]

\printAffiliationsAndNotice{\icmlEqualContribution} %

\begin{abstract}
Attention-based graph neural networks (GNNs), such as graph attention networks (GATs), have become popular neural architectures for processing graph-structured data and learning node embeddings. Despite their empirical success, these models rely on labeled data and the theoretical properties of these models have yet to be fully understood. In this work, we propose a novel attention-based node embedding framework for graphs. Our framework builds upon a hierarchical kernel for multisets of subgraphs around nodes (\eg neighborhoods) and each kernel leverages the geometry of a smooth statistical manifold to compare pairs of multisets, by ``projecting'' the multisets onto the manifold. By explicitly computing node embeddings with a manifold of Gaussian mixtures, our method leads to a new attention mechanism for neighborhood aggregation. We provide theoretical insights into generalizability and expressivity of our embeddings, contributing to a deeper understanding of attention-based GNNs. We propose both efficient unsupervised and supervised methods for learning the embeddings. Through experiments on several node classification benchmarks, we demonstrate that our proposed method outperforms existing attention-based graph models like GATs. Our code is available at \url{https://github.com/BorgwardtLab/fisher_information_embedding}.
\end{abstract}

\section{Introduction}

Graph-structured data has gained significant attention in various domains in recent years. To analyze this type of data with graph relationships, graph neural networks~(GNNs) have been widely used and have been shown to be powerful tools. They have been successfully applied to broad classes of application domains, such as drug discovery~\citep{gaudelet2021utilizing}, protein design~\citep{ingraham2019generative} and social network analysis~\citep{fan2019graph}. 

GNNs construct multilayer models and iteratively perform neighborhood aggregation on previous layers to generate new node representations~\citep{xu2018powerful}. A popular variant of GNNs, known as graph attention networks (GATs)~\citep{velickovic2018graph}, utilizes an attention mechanism for neighborhood aggregation. The attention mechanism of GATs generalizes traditional average and max pooling of the representations of neighbors by performing a data-dependent weighted average of neighbors. Despite the empirical success of GATs, their theoretical properties have not been fully explored. A deeper understanding of the generalizability and expressivity of attention-based GNNs is crucial for identifying limitations and developing more advanced models. Additionally, GATs rely on labeled data, making them less flexible than unsupervised node embedding methods, \eg these derived from trainable kernel methods~\citep{chen2020convolutional}.

Another important class of methods for dealing with the complexity of graph-structured data is node and graph kernels. Node kernels~\citep{kondor2004diffusion,smola2003kernels} have demonstrated success in capturing the similarity between two nodes in a graph by leveraging the graph Laplacian. However, this class of kernels are known to possess limitations, including high computational complexity~\citep{smola2003kernels} and challenges in generalizing across multiple graphs. In contrast, graph kernels~\citep{borgwardt2020graph,kriege2020survey} rely on the $\mathcal{R}$-convolution framework, which computes local similarities among substructures and aggregates these similarities. Recent work has also identified limitations of this framework and proposed approaches to overcome them. For example, \citet{togninalli2019wasserstein} leverage optimal transport to aggregate node embeddings obtained by the Weisfeiler-Lehman~(WL) process, in order to improve their ability to capture complex characteristics of the graph. This kernel was later extended to obtain graph embeddings that can handle large-scale datasets~\citep{kolouri2020wasserstein}. However, these kernels or embeddings are based on unparameterized node embeddings obtained by averaging the node representations of neighbors, without any non-linearities. This can restrict their ability to accurately predict node properties in certain tasks.

In this work, we propose a novel class of attention-based node embeddings to address the above limitations of existing attention-based graph models. Our method approximates multisets of node features with a class of smooth probability distributions and compares pairs of distributions using a statistical divergence, specifically the Kullback–Leibler~(KL) divergence. By leveraging the geometry of the manifold of the smooth distributions, we demonstrate that the divergence can be approximated, locally at an anchor distribution, by an embedding distance. Additionally, we show that the embeddings can be computed explicitly for a manifold of Gaussian mixtures and that the resulting embedding of each node leads to a new attention mechanism, providing theoretical insights into the expressivity and generalizability of this class of node embeddings. We propose efficient unsupervised and supervised methods for learning the node embeddings, using the same architecture. We validate our theoretical findings on synthetic datasets. Our empirical evaluation further shows that our proposed node embeddings can achieve similar or even better performance compared to GATs on several node classification benchmarks. To summarize, our key contributions are:
\begin{itemize}
    \item We introduce a new framework for node representation learning based on hierarchical kernels for multisets.
    \item We propose a new general embedding, named Fisher information embedding (FIE), for probability measures and multisets by approximating the KL divergence on a smooth statistical manifold. We further provide theoretical insights into the generalizability and expressivity of our embeddings, contributing to a deeper understanding of attention-based GNNs.
    \item When restricting the analysis to Gaussian mixtures, we demonstrate that FIE exhibits a closed form and can be learned efficiently. We provide both unsupervised and supervised methods for learning the node embeddings.
    \item Finally, we validate our theoretical findings on both simulated and real-world data and demonstrate that our methods achieve comparable or better results to existing attention-based GNNs.
\end{itemize}

We will present related work in Section~\ref{sec:related_work}.
In Section~\ref{sec:framework}, we introduce a general multilayer kernel embedding for nodes where each kernel operates on multisets of node features (\eg neighborhoods of nodes). In Section~\ref{sec:fie}, we show how to define such a kernel for multisets, by transforming multisets to probability distributions through maximum likelihood estimation and comparing probability distributions through a kernel defined on the manifold of parametric distributions. In Section~\ref{sec:experiments}, we validate the effectiveness of our node embeddings on both simulated and real-world datasets.

\section{Related work}\label{sec:related_work}
This paper focuses on node embedding methods and general set pooling methods. We present and discuss in this section recent work in both fields.
\paragraph{Node embedding methods}
Our method provides an unsupervised node embedding approach, which is a popular and well-established method for graph-structured data. Some well-known unsupervised node embedding methods include DeepWalk~\citep{perozzi2014deepwalk} and node2vec~\citep{grover2016node2vec}, which extract node representations by performing random walks on the graph. In \citet{rozemberczki2021multi}, information from random walks at different scales was pooled to produce the embeddings. Additionally, in \citet{abu2018watch}, the authors proposed a node embedding method with attention-driven random walks. The method presented in \citet{xu2019gromov} proposed structural node embeddings that were learned by matching two distinct graphs using the optimal transport framework. In \citet{zhu2021node}, the authors presented a node embedding that mapped nodes to their embeddings according to both positional proximity and structural similarity criteria. A simple yet effective node embedding can be extracted using the WL algorithm for graph isomorphism~\citep{shervashidze2011weisfeiler, togninalli2019wasserstein}, which used a message-passing framework to refine the either categorical or continuous node attributes. \citet{chen2020convolutional} proposed a kernel embedding for nodes based on sum aggregations of path features. In contrast, our approach defines a kernel embedding for arbitrary multisets beyond simple sum aggregation, including multisets of path features by~\citet{chen2020convolutional}.
\paragraph{Set pooling methods}
One of the essential components in message-passing-based graph learning methods is pooling information from node neighborhoods, which often involves pooling a (multi)set of features. One of the first methods to provide an encoding for sets was DeepSets~\citep{zaheer2017deep}, which transformed the elements of the set with a neural network and pooled them using either their sum or average. Other methods include optimal transport-based pooling~\citep{mialon2020trainable,kim2021differentiable} and trainable set embeddings based on the sliced-Wasserstein distance~\citep{naderializadeh2021pooling}.
Set pooling is also used to provide graph-level embeddings, typically by pooling node embeddings. For example, WEGL~\citep{kolouri2020wasserstein} obtained a graph embedding by representing the set of node embeddings as a probability distribution and computing an embedding based on the Wasserstein distance to a reference distribution. Additionally, in~\citet{cheng2022revisiting}, the authors unified some existing global pooling methods, such as mean, max, and attention pooling, under the framework of optimal transport by finding an optimal transport plan between the samples of the sets and the features of the embeddings.

\section{A general framework for node learning}\label{sec:framework}
In this section, we present our node embedding framework for graphs, discuss its link to previous work and provide insights into its expressivity and generalizability.

\subsection{Multisets and kernels for multisets}\label{sec:multisets}
A multiset is a generalized notion of a set that allows multiple instances of its elements. By abuse of notation, we will use the same notation as for a set. Here, we handle multisets of features in $\R^d$ living in
\begin{equation*}
\begin{split}
    \Xcal^d = \Bigl\{ \x ~|~ \x = \{\x_1, \dots, \x_n\} \text{ such that } \\ \x_1, \dots, \x_n \in \R^d \text{ for some }n \geq 1 \Bigr\}.
\end{split}
\end{equation*}
The cardinality of a multiset, denoted by $|\cdot|$ is defined as the number of elements by summing the multiplicities. 
Here, we assume that we have access to a kernel defined on the space of multisets $K_{\mathrm{ms}}:\Xcal^d\times\Xcal^d\to\R$ and its exact or approximate embedding $\psi_{\mathrm{ms}}:\Xcal^d\to\R^p$ such that $K_{\mathrm{ms}}(\x,\x')\approx\langle\psi_{\mathrm{ms}}(\x),\psi_{\mathrm{ms}}(\x')\rangle$ through, \eg the Nystr\"om approximation~\citep{williams2000using}. We will describe in Section~\ref{sec:fie_multisets} how to define such a kernel on the space of multisets. It is worth noting that $K_{\mathrm{ms}}$ can be the composition of multiple kernels, and the resulting embedding $\psi_{\mathrm{ms}}$ can be obtained by composing the kernel mappings.

\subsection{A multilayer kernel for nodes and graphs}
Let us denote by $\Gcal=(\Vcal, \Ecal, a)$ a graph with vertices $\Vcal$ and edges $\Ecal$ associated with node attributes $a:\Vcal\to\R^{d_0}$. For any node $v\in\Vcal$, we denote by $\Scal_{\Gcal}(v)$ a certain multiset of subgraphs in $\Scal$ rooted at $v$ in $\Gcal$ and assume that there exists an \emph{injective} mapping $h$ from $\Scal$ to $\R^{d_{\nicefrac{1}{2}}}$ that represents any subgraph in $\Scal$ as a $d_{\nicefrac{1}{2}}$-dimensional vector where $d_{\nicefrac{1}{2}}$ denotes some hidden dimension. We denote by $h(\Scal_{\Gcal}(v))$ the multisets in $\Xcal^{d_{\nicefrac{1}{2}}}$ consisting of the images of any subgraph in $\Scal_{\Gcal}(v)$ under $h$. Then, for any two nodes $v$ and $v'$ in two graphs $\Gcal$ and $\Gcal'$, we can define a class of kernels based on the previously defined multiset kernel:
\begin{equation}
	K^{(1)}(v,v'):=K_{\mathrm{ms}}^{(1)}\left(h(\Scal_{\Gcal}(v)),h(\Scal_{\Gcal'}(v'))\right).
\end{equation}
An (approximate) embedding for the above kernel is given by $\psi_1(v):=\psi^{(1)}_{\mathrm{ms}}(h(\Scal_{\Gcal}(v)))\in\R^{d_1}$. This results in a new graph with the same set of vertices and edges but a different attribute function $\Gcal_1=(\Vcal,\Ecal, \psi_1:\Vcal\to\R^{d_1})$. Repeating this process $T$ times results in a sequence of graphs $\Gcal_1,\Gcal_2,\dots,\Gcal_{T}$, where each graph $\Gcal_t$ carries a new node embedding $\psi_t:\Vcal\to\R^{d_t}$. A final graph-level embedding can also be obtained using a new multiset kernel, by viewing a graph as a multiset of node embeddings. Finally, a multilayer kernel between two nodes can be defined as:
\begin{equation}\label{eq:multilayer_kernel}
    K(v,v'):=\sum_{t=0}^T \langle\psi_t(v),\psi_t(v') \rangle_{\R^{d_{t}}},
\end{equation}
where $\psi_0=a$. This class of kernels includes several previous graph kernels such as \citet{chen2020convolutional} and \citet{morris2022speqnets}.

\subsubsection{Examples of $\Scal_{\Gcal}$ and $h$}\label{sec:examples}
\paragraph{Neighborhoods}
Similar to GNNs, a straightforward choice of $\Scal_{\Gcal}(v)$ is the neighborhood of $v$, \ie $\Scal_{\Gcal}(v):=\Ncal_{\Gcal}(v)$ or the neighborhood including the root node $\Scal_{\Gcal}(v):=\Ncal_{\Gcal}(v)\cup \{v\}$. $h_t$ at layer $t\geq 1$ can be simply chosen as the node embedding from the previous layer $h_t:=\psi_{t-1}$. Alternatively, one could also consider the 2-tuples consisting of $v$ and its neighbors, namely $\Scal_{\Gcal}(v):=\{(v,u):u\in\Ncal_{\Gcal}(v) \}$, and $h_t(v, u):=\nicefrac{1}{2}(\psi_{t-1}(v)+\psi_{t-1}(u))$ as used in~\citet{togninalli2019wasserstein} and \citet{kolouri2020wasserstein}. 

\paragraph{Paths and higher order multisets}
Similar to \citet{chen2020convolutional}, one could also use paths to define the set of subgraphs. Specifically, let us denote by $\Pcal_{\Gcal}^k(v)$ the paths from $v$ of fixed length $k$. Then, we define $\Scal_{\Gcal}(v):=\Pcal_{\Gcal}^k(v)$ and $h_t(p):=\psi_{t-1}(p)$ for any path $p$ as the concatenation of the node features in this path. For higher order multisets defined by the $k$-dimensional WL algorithms~\citep{morris2022speqnets}, one could define $\Scal_{\Gcal}$ and $h$ in a similar way as in previous work. 

In this work, we only focus on the above example of neighborhoods which has a tight link with GATs. We illustrate this step via the arrow (a) of Figure~\ref{fig:overview}.

\subsubsection{Theoretical properties}
Here, we discuss the necessary conditions for defining a good kernel embedding on multisets and will present in Section~\ref{sec:fie} an embedding satisfying these conditions.
A good kernel embedding $\psi_{\mathrm{ms}}$ should be injective to guarantee the expressive capability of the node embeddings. This injectivity assumption was also widely adopted previously~\citep{xu2018powerful} to prove the expressivity of GNNs. In particular when we consider the neighborhoods example for $\Scal_{\Gcal}$ and $h$, we can show the following link between our embedding and WL test by using similar arguments to \citet[Theorem 3]{xu2018powerful}
\begin{lemma}\label{lemma:expressive}
    If $\psi^{(t)}_{\mathrm{ms}}$ is injective for all $t$ in $1,\dots,T$, then $\psi_t(v)\neq \psi_t(v')$ for any $v$ and $v'$ that the WL isomorphism test decides as non-isomorphic.
\end{lemma}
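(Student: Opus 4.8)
The plan is to show that the node embedding $\psi_t$ is a \emph{refinement} of the $t$-th round Weisfeiler--Lehman (WL) coloring $c_t$, from which the statement follows by contraposition. Writing $c_t(v)$ for the WL color of $v$ after $t$ refinement rounds, and reading ``WL decides $v,v'$ non-isomorphic'' as $c_t(v)\neq c_t(v')$ for some $t\le T$, I would prove by induction on $t$ the invariant that there exists a function $\phi_t$ with $c_t(v)=\phi_t(\psi_t(v))$ for every node $v$. Once this holds, $c_t(v)\neq c_t(v')$ immediately forces $\psi_t(v)\neq\psi_t(v')$, which is exactly the claim.

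For the base case $t=0$ I would use $\psi_0=a$ together with the fact that the initial WL coloring $c_0$ is, up to relabeling, the node-attribute function $a$, so $\phi_0$ can be taken as the identity. For the inductive step, assume $c_{t-1}=\phi_{t-1}\circ\psi_{t-1}$. In the neighborhood instantiation we have $h_t=\psi_{t-1}$ and
\begin{equation*}
    \psi_t(v)=\psi^{(t)}_{\mathrm{ms}}\bigl(\{\psi_{t-1}(u):u\in\Scal_{\Gcal}(v)\}\bigr).
\end{equation*}
Since $\psi^{(t)}_{\mathrm{ms}}$ is injective by hypothesis, it admits a left inverse on its image, so $\psi_t(v)$ determines the multiset $\{\psi_{t-1}(u):u\in\Scal_{\Gcal}(v)\}$. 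Applying $\phi_{t-1}$ elementwise (via the inductive hypothesis) recovers the multiset of WL colors $\{c_{t-1}(u):u\in\Scal_{\Gcal}(v)\}$, which in turn determines $c_t(v)$ through the WL hash. Composing these maps yields the desired $\phi_t$, closing the induction.

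The main obstacle I anticipate is the faithful reconstruction of the WL update from the aggregated multiset, i.e.\ recovering the pair $\bigl(c_{t-1}(v),\{c_{t-1}(u):u\in\Ncal_{\Gcal}(v)\}\bigr)$ that the standard WL hash consumes. With the root-inclusive choice $\Scal_{\Gcal}(v)=\Ncal_{\Gcal}(v)\cup\{v\}$, the embedding only determines the \emph{combined} multiset, so strictly $\psi_t$ refines the WL variant whose color update is a function of the self-inclusive neighbor multiset; to match the standard WL that keeps the root color separate, one would use the $2$-tuple instantiation $\Scal_{\Gcal}(v)=\{(v,u):u\in\Ncal_{\Gcal}(v)\}$ (or an analogue of the $\epsilon$-reweighting of Xu et al.) so that the root contribution can be disentangled. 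A second, more technical point is that the injectivity of $\psi^{(t)}_{\mathrm{ms}}$ must hold on the (countable) domain of attainable multisets so that the left inverse used above is well defined; this is precisely the injectivity hypothesis of the lemma, and the remainder of the argument reduces to the bookkeeping of composing the finitely many maps $\phi_0,\dots,\phi_T$.
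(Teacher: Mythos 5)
Your proof is correct and rests on the same two ingredients as the paper's (induction over layers plus the injectivity hypothesis on $\psi^{(t)}_{\mathrm{ms}}$), but the invariant you carry through the induction points in the opposite direction. The paper shows that there is an \emph{injective} map $\varphi$ with $\psi_t(v)=\varphi(a_t(v))$, i.e.\ the embedding is an injective relabeling of the WL color: in the inductive step it substitutes $\psi_{t-1}=\varphi\circ a_{t-1}$ into the aggregation, observes that the composite $\psi^{(t)}_{\mathrm{ms}}\circ\varphi$ is injective, and factors through the inverse of the WL hash $g$. You instead show $c_t=\phi_t\circ\psi_t$, i.e.\ the WL color is \emph{recoverable} from the embedding, using injectivity of $\psi^{(t)}_{\mathrm{ms}}$ to build a left inverse and reconstruct the neighbor multiset; the claim then follows by contraposition. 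The two invariants are essentially equivalent here (the paper's injective $\varphi$ can be inverted on its image to give your $\phi_t$), but yours is the weaker statement that suffices, and it avoids invoking injectivity of the WL hash $g$. Your anticipated obstacle --- disentangling the root color from the aggregated multiset --- is real, and the paper resolves it exactly as you suggest: its proof instantiates the aggregation on the multiset of pairs $\{(\psi_{t-1}(v),\psi_{t-1}(u)):u\in\Ncal(v)\}$ rather than on the root-inclusive neighborhood, so that the WL input $\bigl(c_{t-1}(v),\{c_{t-1}(u):u\in\Ncal(v)\}\bigr)$ can be read off. The only residual caveats, shared by both arguments, are the edge case of isolated nodes (an empty pair multiset loses the root color) and the fact that the injectivity hypothesis must hold on the set of attainable multisets, which is exactly how the lemma's hypothesis should be read.
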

In addition to expressivity, a good kernel should also guarantee the stability and invariance of the predictions, which is controlled by the induced reproducing kernel Hilbert space~(RKHS) norm. The RKHS norm also provides a natural way to control the model complexity, leading to generalization bounds through, \eg Rademacher complexity and margin bounds~\citep{boucheron2005theory,shalev2014understanding}. Specifically, the generalization bound is given by the following classical result:
\begin{lemma}[\citet{boucheron2005theory}]\label{lemma:generalization}
    Consider a binary task with labels in $\Ycal=\{-1,1\}$ and nodes on a graph $\Gcal=(\Vcal,\Ecal,a)$. Let us denote by $\Dcal$ the data distribution of samples from $\Dcal$ and $\Ycal$. We define $L(f):=\Pbb_{(v,y)\sim \Dcal}(y f(v)<0)$ as the expected error of a function $f:\Vcal\to \R$. For a training dataset $(v_1,y_1),\dots,(v_N,y_N)$, we define the training error with confidence margin $\gamma>0$ as $L^{\gamma}_N(f):=\nicefrac{1}{N}\sum_{i=1}^N \mathbf{1}_{y_i f(x_i)<\gamma}$. Then, with probability $1-\delta$, we have, for any $\gamma>0$ and $f\in\Hcal_{K}$ the RKHS of $K$ in \eqref{eq:multilayer_kernel}
    \begin{equation}
        L(f)\leq L^{\gamma}_N(f)+O\left(\frac{\bar{B}\norm{f}_{\Hcal_K}}{\gamma N} + \sqrt{\frac{\log(\nicefrac{1}{\delta})}{N}}\right),
    \end{equation}
    where $\bar{B}=\sqrt{\nicefrac{1}{N}\sum_{i=1}^N K(v_i,v_i)}$.
\end{lemma}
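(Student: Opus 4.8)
The statement is the classical margin-based generalization bound for kernel classifiers, so the plan is to follow the standard Rademacher-complexity route and let the quantity $\bar B$ emerge from the complexity of the RKHS ball. First I would replace the discontinuous $0$--$1$ loss by a Lipschitz surrogate. Define the ramp loss $\phi_\gamma(t) = \min\bigl(1,\max(0,1-t/\gamma)\bigr)$, which is $\nicefrac{1}{\gamma}$-Lipschitz and sandwiches the indicators of interest: $\mathbf{1}_{t<0} \le \phi_\gamma(t) \le \mathbf{1}_{t<\gamma}$. Applying this pointwise at $t = y f(v)$ gives $L(f) \le \E_{(v,y)\sim\Dcal}[\phi_\gamma(yf(v))]$ on the population side and $\nicefrac{1}{N}\sum_{i=1}^N \phi_\gamma(y_i f(v_i)) \le L^\gamma_N(f)$ on the empirical side, so it suffices to control the gap between the population and empirical averages of the smooth surrogate uniformly over the relevant function class.

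For a fixed radius $B$, set $\Hcal_B = \{f \in \Hcal_K : \norm{f}_{\Hcal_K} \le B\}$. The standard symmetrization plus bounded-differences (McDiarmid) argument yields, with probability at least $1-\delta$ and uniformly over $\Hcal_B$,
\[
\E_{(v,y)\sim\Dcal}[\phi_\gamma(yf(v))] - \frac{1}{N}\sum_{i=1}^N \phi_\gamma(y_i f(v_i)) \le 2\,\widehat{\mathfrak{R}}_N(\phi_\gamma \circ \Hcal_B) + \sqrt{\frac{\log(\nicefrac{1}{\delta})}{2N}}.
\]
Two reductions then dispose of the complexity term. Since $\phi_\gamma$ is $\nicefrac{1}{\gamma}$-Lipschitz and $\sigma_i y_i$ is again a Rademacher variable (so the labels are harmless), the Ledoux--Talagrand contraction inequality gives $\widehat{\mathfrak{R}}_N(\phi_\gamma \circ \Hcal_B) \le \tfrac{1}{\gamma}\widehat{\mathfrak{R}}_N(\Hcal_B)$. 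Next, writing $f(v_i) = \langle f, \Phi(v_i)\rangle_{\Hcal_K}$ via the reproducing property and using Cauchy--Schwarz followed by Jensen's inequality,
\[
\widehat{\mathfrak{R}}_N(\Hcal_B) = \frac{B}{N}\,\E_\sigma\norm{\textstyle\sum_{i=1}^N \sigma_i \Phi(v_i)}_{\Hcal_K} \le \frac{B}{N}\sqrt{\sum_{i=1}^N K(v_i,v_i)} = \frac{B\,\bar{B}}{\sqrt{N}},
\]
which is precisely where $\bar B$ enters. Combining the three displays yields, for every fixed $B$ and $\gamma$, a bound of the form $L(f) \le L^\gamma_N(f) + O\bigl(\tfrac{B\bar B}{\gamma\sqrt N} + \sqrt{\log(\nicefrac1\delta)/N}\bigr)$, matching the stated order with $B = \norm{f}_{\Hcal_K}$.

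The one genuinely delicate point is upgrading from a fixed ball $\Hcal_B$ and fixed margin $\gamma$ to the ``for any $\gamma>0$ and any $f\in\Hcal_K$'' quantifier in the statement, since the inequality above holds only for a prescribed radius. I would resolve this by a \emph{peeling} (stratification) argument: apply the fixed-radius bound on a dyadic grid of values $B\in\{2^k\}$ and $\nicefrac1\gamma\in\{2^j\}$, allocate confidence $\delta_{k,j}\propto \delta\,2^{-(|k|+|j|)}$ via a union bound, and for an arbitrary $f$ select the cell with $2^{k-1} < \norm{f}_{\Hcal_K}\le 2^k$; the resulting $\log\log$ overheads are absorbed into the $O(\cdot)$. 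This is the main obstacle, as everything else is a direct assembly of textbook inequalities, and it is also what explains why $\norm{f}_{\Hcal_K}$ appears explicitly inside the bound rather than as a fixed constraint.
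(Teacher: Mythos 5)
Your proposal is essentially correct, and it is the standard Rademacher-complexity proof of this classical result: the paper itself offers no argument for this lemma, stating only that it is classical and deferring entirely to the cited reference, so you are supplying the derivation that the paper delegates. The surrogate ramp loss, symmetrization plus McDiarmid, Ledoux--Talagrand contraction, the reproducing-property/Cauchy--Schwarz/Jensen bound on the Rademacher complexity of the RKHS ball, and the dyadic peeling over $\norm{f}_{\Hcal_K}$ and $\gamma$ are exactly the right ingredients; the only bookkeeping caveat is that the union bound over the grid contributes additive terms of order $\sqrt{\log\log(\norm{f}_{\Hcal_K}/\gamma)/N}$ rather than nothing, which is how such statements are usually written in full. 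One discrepancy is worth flagging: with the paper's normalization $\bar{B}=\sqrt{\nicefrac{1}{N}\sum_{i=1}^N K(v_i,v_i)}$, your (correct) computation gives a complexity term of order $\bar{B}\norm{f}_{\Hcal_K}/(\gamma\sqrt{N})$, not $\bar{B}\norm{f}_{\Hcal_K}/(\gamma N)$ as displayed in the lemma; the stated denominator $\gamma N$ would be consistent only with the unnormalized quantity $\sqrt{\sum_i K(v_i,v_i)}$, so this is a typo in the paper's statement rather than a gap in your argument, and your claim that your bound ``matches the stated order'' should be qualified accordingly.
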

However, the RKHS norm of $f$ is generally unknown and depends on the regularity of $\psi_t$ which is hard to characterize. In section~\ref{sec:theory_fie}, we will show weak results that implicitly control the RKHS norm of $f$. A more comprehensive theoretical analysis of the generalization bounds will be the subject of future work.

\section{Fisher information embedding for multisets}\label{sec:fie}
\begin{figure*}[t]
    \centering
    \resizebox{\textwidth}{!}{\input{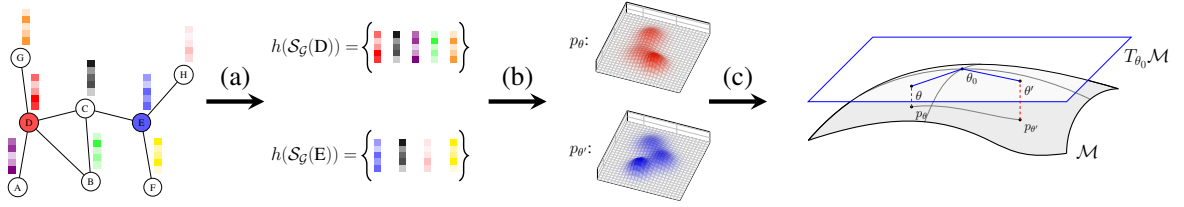}}
    \caption{An illustration of the Fisher Information Embedding for nodes. (a) Multisets $h(\mathcal{S_G}(\cdot))$ of node features are obtained from the neighborhoods of each node. (b) Multisets are transformed to parametric distributions, \eg $p_\theta$ and $p_{\theta'}$, via maximum likelihood estimation. (c) The node embeddings are obtained by estimating the parameter of each distribution using the EM algorithm at an anchor distribution $p_{\theta_0}$ as the starting point.
    The last panel shows a representation of the parametric distribution manifold $\mathcal{M}$ and its tangent space $T_{\theta_0}\mathcal{M}$ at the anchor point $\theta_0$. The points $p_{\theta}$ and $p_{\theta'}$ represent probability distributions on $\mathcal{M}$ and the gray dashed line between them their geodesic distance. The red dashed lines represent the retraction mapping $R_{\theta_0}^{-1}$.}
    \label{fig:overview}
\end{figure*}

In this section, we present a kernel for probability measures and multisets. We prove that the associated kernel embedding is a local approximation of the KL divergence, providing theoretical insights into the generalizability and expressivity of the embeddings. For a manifold of Gaussian mixtures, we show that the induced kernel embedding can be computed explicitly and results in a new class of attention-based node embeddings.

\subsection{A tangent-based kernel for probability measures}
Let us consider a smooth statistical manifold $\Mcal$ defined as a subset of probability measures. We assume that $\Mcal$ is a Riemannian manifold and denote by $T_{\mu}\Mcal$ its tangent space at every point $\mu\in\Mcal$ endowed with a positive definite inner product $g_{\mu}:T_{\mu}\Mcal\times T_{\mu}\Mcal\to\R$, denoted by $\langle\,,\,\rangle_{g_{\mu}}$. We also assume that there exists a retraction mapping $R_{\mu}:T_{\mu}\Mcal\to \Mcal$ on each point $\mu\in\Mcal$ satisfying $R_{\mu}(0)=\mu$ and $\mathrm{D}R_{\mu}(0)=\id_{T_{\mu}\Mcal}$. By the inverse function theorem, $R_{\mu}$ is a local diffeomorphism and we denote by $R_{\mu}^{-1}$ its local inverse. Now at every point $\mu$, we can define a positive definite kernel on the proximity of $\mu$ in $\Mcal$:
\begin{equation}\label{eq:tangent_kernel}
	K_{\mu}(u,v)=\langle R_{\mu}^{-1}(u), R_{\mu}^{-1}(v)\rangle_{g_{\mu}}.
\end{equation}
By using the mathematical tools from information geometry, we show that this kernel approximates well any divergence associated to the Riemannian metric $g$, which is given by the following theorem adapted from~\citet[Theorem 3.20]{amari2000methods}:
\begin{theorem}\label{thm:information_geometry}
	Let $D$ be any divergence. We have
	\begin{equation}
		D(u\|\mu)+D(\mu\| v)-D(u\| v)=K_{\mu}(u,v)+O(\Delta^2),
	\end{equation}
	where $\Delta:=\max\{\|\xi(u)-\xi(\mu) \|, \|\xi(v)-\xi(\mu)\|\}$ for an arbitrary coordinate system $\xi$.
\end{theorem}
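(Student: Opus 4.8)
The plan is to treat $D$ as a smooth contrast function on $\Mcal\times\Mcal$ and recover the bilinear form $K_\mu$ by a polarization argument, exactly as $\tfrac12(\|a\|^2+\|b\|^2-\|a-b\|^2)=\langle a,b\rangle$ recovers an inner product from a squared norm. Fix the coordinate system $\xi$, write $a=\xi(\mu)$, and set the (small) displacement vectors $x=\xi(u)-\xi(\mu)$ and $y=\xi(v)-\xi(\mu)$, so that $\|x\|,\|y\|\le\Delta$ by definition. The only inputs from information geometry I need are the defining properties of a divergence: $D$ vanishes on the diagonal, its first derivatives in either argument vanish there, and its mixed second derivative recovers the metric (Eguchi's relation), i.e. in coordinates $g_{ij}(a)=-\partial_{s^i}\partial_{t^j}D[a+s:a+t]\big|_{s=t=0}$, where $s,t$ denote perturbations of the first and second arguments.

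First I would Taylor-expand $D[a+s:a+t]$ to second order about $(0,0)$. Writing $P_{ij}$, $Q_{ij}$, $R_{ij}$ for the pure-$s$, mixed, and pure-$t$ second-derivative coefficients (the linear and constant terms drop out by the divergence axioms), I evaluate the three terms at the relevant arguments: $D(u\|\mu)=D[a+x:a]=\tfrac12 P_{ij}x^ix^j+O(\Delta^3)$, $D(\mu\|v)=D[a:a+y]=\tfrac12 R_{ij}y^iy^j+O(\Delta^3)$, and $D(u\|v)=D[a+x:a+y]=\tfrac12 P_{ij}x^ix^j+Q_{ij}x^iy^j+\tfrac12 R_{ij}y^iy^j+O(\Delta^3)$. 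Forming the combination, the pure-$x$ and pure-$y$ quadratics cancel pairwise and only the cross term survives:
\[
D(u\|\mu)+D(\mu\|v)-D(u\|v)=-Q_{ij}\,x^iy^j+O(\Delta^3)=g_{ij}(a)\,x^iy^j+O(\Delta^3),
\]
using Eguchi's relation $g_{ij}(a)=-Q_{ij}$ in the last step.

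It then remains to identify $g_{ij}(a)x^iy^j$ with $K_\mu(u,v)$. Because the retraction satisfies $R_\mu(0)=\mu$ and $\mathrm{D}R_\mu(0)=\id_{T_\mu\Mcal}$, its local inverse is the identity to first order in coordinates, so $R_\mu^{-1}(u)=x+O(\|x\|^2)$ and likewise for $v$. Evaluating the fixed inner product $g_\mu$ at the base point gives $K_\mu(u,v)=\langle R_\mu^{-1}(u),R_\mu^{-1}(v)\rangle_{g_\mu}=g_{ij}(a)x^iy^j+O(\Delta^3)$. Combining with the previous display yields $D(u\|\mu)+D(\mu\|v)-D(u\|v)=K_\mu(u,v)+O(\Delta^3)$, which is in particular the stated $O(\Delta^2)$ bound; both sides share the same leading bilinear term $g_{ij}(a)x^iy^j$.

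The main obstacle is the careful bookkeeping around the asymmetry of $D$: since $D(p\|q)\neq D(q\|p)$ in general, one must verify that the pure-$x$ quadratic coefficient appearing in $D(u\|\mu)$ matches the one in $D(u\|v)$ (it does, as both come from the first-argument Hessian $P$), and similarly for the pure-$y$ terms, so that the cancellation is exact at second order. I would also want to confirm that the bilinear form $-Q_{ij}$ is indeed the symmetric positive-definite metric $g$ rather than a generic bilinear form, which is where Eguchi's relation and the standard contrast-function identities (obtained by differentiating $D[\theta:\theta]\equiv 0$) do the work. Finally, the uncancelled cubic terms of $D$ and the second-order deviation of the retraction are genuinely $O(\Delta^3)$ and need not coincide, so the approximation is tight at exactly the order claimed; there is no need to assume any compatibility between $R$ and the connection induced by $D$.
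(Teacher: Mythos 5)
Your proof is correct, and it takes a genuinely different route from the paper's. The paper invokes Amari--Nagaoka's Theorem 3.20 directly, which expresses the left-hand side as $\langle \exp_{\mu}^{-1}(u),\exp_{\mu}^{*-1}(v)\rangle_g$ up to third-order terms, where $\exp_\mu$ and $\exp_\mu^*$ are the exponential maps of the dual connections induced by $D$; it then bounds $\|R_\mu^{-1}(u)-\exp_\mu^{-1}(u)\|_g=O(\Delta^2)$ and transfers the estimate to $K_\mu$ via the triangle and Cauchy--Schwarz inequalities. You instead bypass the dual-connection machinery entirely: a second-order Taylor expansion of $D$ in the coordinates $\xi$, combined with the contrast-function identities (vanishing on the diagonal, vanishing first derivatives, and Eguchi's relation $P=R=-Q=g$ obtained by differentiating $\partial_s D|_{s=t}\equiv 0$ and $\partial_t D|_{s=t}\equiv 0$ along the diagonal), shows that the pure quadratic terms cancel and only $g_{ij}(a)x^iy^j$ survives; the retraction axioms then identify this with $K_\mu(u,v)$ up to $O(\Delta^3)$. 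Your argument is more elementary and self-contained, makes explicit the (necessary) assumption that $D$ induces the metric $g_\mu$ appearing in $K_\mu$ -- the theorem cannot hold for a divergence unrelated to $g_\mu$, a point the paper leaves implicit in the phrase ``divergence associated to the Riemannian metric $g$'' -- and it correctly observes that the connection-dependent corrections are third order and hence irrelevant at the stated accuracy. The paper's route buys a direct link to the established dualistic-structure framework (useful if one later wants the third-order, connection-dependent refinement), at the cost of importing the cited theorem and the extra comparison between $R_\mu^{-1}$ and the two exponential maps. Both arguments in fact yield the sharper $O(\Delta^3)$ error, which implies the $O(\Delta^2)$ claimed.
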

The above kernel is a general tangent-based kernel, generalizing several embeddings proposed in the context of optimal transport, such as \citet{mialon2020trainable} and \citet{kolouri2020wasserstein}. In the following section, we will focus on an example of this kernel that exhibits a closed form, which can be easily used in practice.

\subsection{Fisher information embedding for parametric probability distributions and multisets}\label{sec:fie_multisets}
Here, we consider a parametric family of probability distributions $\Mcal=\{p_{\theta}(x)\}_{\theta\in\Theta}$ with $\Theta\subset \R^{m}$ and $\theta$ denoting a global coordinate system of $\Mcal$. A Riemannian metric defined by the Fisher information metric at every point $p_{\theta_0}$ is given by:
\begin{equation}
	\langle \theta,\theta'\rangle_{g_{\theta_0}}:=\theta^{\top}I(\theta_0)\theta'~~\text{for any}~~\theta,\theta'\in T_{\theta_0}\Mcal=\Theta,
\end{equation}
where $I(\theta_0)$ denotes the Fisher information matrix given by
\begin{equation}
	I(\theta_0):=\E[\nabla_{\theta}\log p_{\theta}(x)|_{\theta=\theta_0}\nabla_{\theta}\log p_{\theta}(x)|_{\theta=\theta_0}^{\top}]\in\R^{m\times m}.
\end{equation}
This symmetric matrix is positive definite and is the Hessian matrix of the KL divergence:
\begin{lemma}\label{lemma:kl_div}
	Let us denote by $D(\theta_0,\theta):=\KL(p_{\theta_0}\|p_{\theta} )=\int p_{\theta_0}(x)\log\nicefrac{p_{\theta_0}(x)}{p_{\theta}(x)}\mathrm{d}x$. If $\theta$ is close to $\theta_0$, we have
	\begin{equation}
		D(\theta_0,\theta)=\frac{1}{2}(\theta-\theta_0 )^{\top}I(\theta_0)(\theta-\theta_0)+o((\theta-\theta_0)^2).
	\end{equation}
\end{lemma}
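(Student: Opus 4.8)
The plan is to perform a second-order Taylor expansion of $D(\theta_0,\theta)$ viewed as a function of $\theta$ about the point $\theta=\theta_0$, identifying in turn the zeroth-order term, the gradient, and the Hessian. Writing $D(\theta_0,\theta)=\int p_{\theta_0}(x)\log p_{\theta_0}(x)\,\mathrm{d}x-\int p_{\theta_0}(x)\log p_{\theta}(x)\,\mathrm{d}x$, the first integral is constant in $\theta$, so only the cross-entropy term must be differentiated. Evaluating at $\theta=\theta_0$ gives $D(\theta_0,\theta_0)=\KL(p_{\theta_0}\|p_{\theta_0})=0$, so the zeroth-order term vanishes.

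For the first-order term, I would differentiate under the integral sign to obtain $\nabla_{\theta}D(\theta_0,\theta)=-\int p_{\theta_0}(x)\nabla_{\theta}\log p_{\theta}(x)\,\mathrm{d}x$. Evaluating at $\theta=\theta_0$ and using $\nabla_{\theta}\log p_{\theta}(x)=\nicefrac{\nabla_{\theta}p_{\theta}(x)}{p_{\theta}(x)}$, the factor $p_{\theta_0}(x)$ cancels and the integral reduces to $-\int \nabla_{\theta}p_{\theta}(x)|_{\theta=\theta_0}\,\mathrm{d}x=-\nabla_{\theta}\int p_{\theta}(x)\,\mathrm{d}x\big|_{\theta=\theta_0}=-\nabla_{\theta}1=0$, since every $p_{\theta}$ is a normalized density. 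Hence the gradient at $\theta_0$ is zero, which is expected because $\theta_0$ is a minimizer of the nonnegative divergence $D(\theta_0,\cdot)$.

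For the second-order term, I would invoke the log-derivative identity $\nabla_{\theta}^2\log p_{\theta}(x)=\nicefrac{\nabla_{\theta}^2 p_{\theta}(x)}{p_{\theta}(x)}-\nabla_{\theta}\log p_{\theta}(x)\,\nabla_{\theta}\log p_{\theta}(x)^{\top}$. Multiplying by $-p_{\theta_0}(x)$, integrating, and evaluating at $\theta=\theta_0$, the first contribution is $-\int\nabla_{\theta}^2 p_{\theta}(x)|_{\theta=\theta_0}\,\mathrm{d}x=-\nabla_{\theta}^2\int p_{\theta}(x)\,\mathrm{d}x\big|_{\theta=\theta_0}=0$, again by normalization, while the second contribution is exactly $\E[\nabla_{\theta}\log p_{\theta}(x)|_{\theta_0}\nabla_{\theta}\log p_{\theta}(x)|_{\theta_0}^{\top}]=I(\theta_0)$. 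Thus the Hessian of $D(\theta_0,\cdot)$ at $\theta_0$ equals the Fisher information matrix, and Taylor's theorem yields $D(\theta_0,\theta)=\frac{1}{2}(\theta-\theta_0)^{\top}I(\theta_0)(\theta-\theta_0)+o(\norm{\theta-\theta_0}^2)$, as claimed.

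The main obstacle is purely one of regularity: each step above relies on interchanging differentiation with integration (twice) and on the existence of the relevant integrals. I would therefore assume the standard smoothness and dominated-integrability conditions on $\{p_{\theta}\}$ — smoothness of $\theta\mapsto p_{\theta}(x)$ together with domination of $\nabla_{\theta}p_{\theta}$ and $\nabla_{\theta}^2 p_{\theta}$ by integrable functions in a neighborhood of $\theta_0$ — that justify these exchanges via dominated convergence, and which hold for the Gaussian-mixture families studied in the sequel. Under these assumptions the computation is routine; the only remaining care is to ensure the remainder is uniform enough to be written as $o(\norm{\theta-\theta_0}^2)$, which follows from continuity of the second derivatives of $D(\theta_0,\cdot)$ near $\theta_0$.
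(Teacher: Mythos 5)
Your proof is correct and is the standard textbook argument (second-order Taylor expansion, with the zeroth- and first-order terms vanishing by normalization and the Hessian reducing to $I(\theta_0)$ via the log-derivative identity); the paper does not spell this out but simply defers to a reference in the literature, which contains essentially the same computation. Your explicit attention to the regularity conditions needed to differentiate under the integral sign is appropriate and goes slightly beyond what the paper records.
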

The retraction mapping at $\theta_0$ is simply defined via the parametric coordinates: $R_{\theta_0}:\theta\mapsto p_{\theta+\theta_0}(x)\in\Mcal$. Since $R_{\theta_0}$ is invertible on the entire $\Theta$, the tangent-based kernel in Equation~\eqref{eq:tangent_kernel} for this parametric family is defined on $\Mcal\times\Mcal$ and is given by:
\begin{equation}\label{eq:kernel_proba}
	K_{\theta_0}(p_{\theta},p_{\theta'})=(\theta-\theta_0)^{\top} I(\theta_0)(\theta'-\theta_0).
\end{equation}
And the \emph{Fisher information embedding (FIE) for probability distributions} is defined as:
\begin{equation}\label{eq:fie_proba}
    \varphi_{\theta_0}(p_{\theta}):=I(\theta_0)^{\nicefrac{1}{2} }(\theta-\theta_0).
\end{equation}
However, when comparing two multisets, $\theta$ and $\theta'$ generally are not known but only samples $\x$ and $\x'$ respectively from the corresponding distributions are known. Since the Fisher information metric locally behaves similarly to the KL divergence one could ``project'' the distribution of $\x$ onto $\Mcal$ by finding a probability distribution in $\Mcal$ that minimizes its divergence with $p_{\x}$, where $p_{\x}$ denotes the true density of $\x$. This amounts to computing the maximum log-likelihood (ML) estimator:
\begin{equation} \label{eq:ml}
\begin{split}
    \theta_{\mathrm{ML}}(\x) &:=\argmin_{\theta\in\Mcal} \KL(p_{\x}\| p_{\theta}) \\
    &=\argmax_{\theta\in\Mcal}\E_{x\sim p_{\x}(x)}[\log p_{\theta}(x)].
\end{split}
\end{equation}
For any multisets $\x$ and $\x'$ in $\Xcal^d$, we assume that their ML estimators are accessible and denoted by $\theta_{\mathrm{ML}}(\x)$ and $\theta_{\mathrm{ML}}(\x')$ respectively. Then, similar to kernel in Eq.~\eqref{eq:kernel_proba}, we can define a kernel for multisets:
\begin{equation}
	K_{\theta_0}(\x,\x')=(\theta_{\mathrm{ML}}(\x)-\theta_0)^{\top} I(\theta_0)(\theta_{\mathrm{ML}}(\x')-\theta_0),
\end{equation}
associated with the kernel mapping 
\begin{equation}\label{eq:fie_multisets}
\varphi_{\theta_0}(\x):=I(\theta_0)^{\nicefrac{1}{2} }(\theta_{\mathrm{ML}}(\x)-\theta_0),
\end{equation}
which we call the \emph{Fisher information embedding for multisets}. We illustrate both ML estimation and comparing probability distributions through step (b) and (c) in Figure~\ref{fig:overview}.

However, ML estimators do not necessarily have a closed-form solution in general and could still be hard to compute in practice. Fortunately, if the probability family can be described more simply with an additional latent variable $z$ as $p_{\theta}(x,z)$, one can instead consider the following simpler estimator dependent of $\theta_0$:
\begin{equation}
	\theta_{\mathrm{ML}_{\theta_0}}(\x):= \argmax_{\theta\in\Mcal}\E_{z|x,\theta_0}[\log p_{\theta}(x,z)].
\end{equation}
This estimator can be seen as a good approximation of the ML estimator (their relationship is given in the Appendix) and can be computed by performing a single iteration of the expectation maximization (EM) algorithm using the initial parameter $\theta_0$. More EM iterations could be performed to approximate better the ML estimator. In Section~\ref{sec:gaussian-case}, we will show how to explicitly compute this estimator when restricted to a manifold of Gaussian mixtures.

\subsection{Theoretical analysis of the embedding}\label{sec:theory_fie}
We characterize the Fisher information embedding by linking it to the KL divergence. For any anchor parameter $\theta_0\in\Theta$ and any pair of parameters $\theta, \theta' \in \Theta$, we show that the $\ell_2$-distance between the Fisher information embeddings of $p_{\theta}$ and $\theta'$ approximates the KL divergence locally. Specifically,
\begin{theorem}\label{thm:closeness}
We have
\begin{equation}
    D(p_\theta\| p_{\theta'}) = \frac{\norm{\varphi_{\theta_0}(p_{\theta}) - \varphi_{\theta_0}(p_{\theta'})}^2}{2} + O(\Delta^2),
\end{equation}
where $\Delta=\max\{\|\theta-\theta_0\|,\|\theta'-\theta_0\|\}$.
\end{theorem}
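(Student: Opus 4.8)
The plan is to reduce the statement to the information-geometric identity of Theorem~\ref{thm:information_geometry} combined with the local quadratic expansion of Lemma~\ref{lemma:kl_div}, exploiting the fact that the squared embedding distance is \emph{exactly} a Fisher-weighted quadratic form. Since $\KL$ is the divergence whose induced Riemannian metric is the Fisher metric $g_{\theta_0}$ underlying $K_{\theta_0}$, Theorem~\ref{thm:information_geometry} applies with $D=\KL$.

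First I would rewrite the right-hand side in terms of the coordinates. By \eqref{eq:fie_proba}, $\varphi_{\theta_0}(p_\theta)=I(\theta_0)^{\nicefrac{1}{2}}(\theta-\theta_0)$, so the difference of embeddings is $\varphi_{\theta_0}(p_\theta)-\varphi_{\theta_0}(p_{\theta'})=I(\theta_0)^{\nicefrac{1}{2}}(\theta-\theta')$, whence
\[
    \frac{1}{2}\norm{\varphi_{\theta_0}(p_\theta)-\varphi_{\theta_0}(p_{\theta'})}^2=\frac{1}{2}(\theta-\theta')^{\top}I(\theta_0)(\theta-\theta').
\]
Setting $a:=\theta-\theta_0$ and $b:=\theta'-\theta_0$ and expanding by polarization, this equals $\tfrac12 a^{\top}I(\theta_0)a+\tfrac12 b^{\top}I(\theta_0)b-a^{\top}I(\theta_0)b$, where the cross term is precisely the kernel $K_{\theta_0}(p_\theta,p_{\theta'})$ of \eqref{eq:kernel_proba}. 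Hence it suffices to match these three quadratic terms against an expansion of $D(p_\theta\|p_{\theta'})$.

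Next I would apply Theorem~\ref{thm:information_geometry} with $u=p_\theta$, $\mu=p_{\theta_0}$, $v=p_{\theta'}$, which gives
\[
    D(p_\theta\|p_{\theta'})=D(p_\theta\|p_{\theta_0})+D(p_{\theta_0}\|p_{\theta'})-K_{\theta_0}(p_\theta,p_{\theta'})+O(\Delta^2).
\]
Comparing with the display above, the theorem follows once I show $D(p_{\theta_0}\|p_{\theta'})=\tfrac12 b^{\top}I(\theta_0)b+o(\Delta^2)$ and $D(p_\theta\|p_{\theta_0})=\tfrac12 a^{\top}I(\theta_0)a+o(\Delta^2)$. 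The former is exactly Lemma~\ref{lemma:kl_div} with $\theta\leftarrow\theta'$; the latter is its reverse-direction analogue.

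The main obstacle is this reverse direction: Lemma~\ref{lemma:kl_div} expands $\KL(p_{\theta_0}\|p_\theta)$, whereas the identity above produces $\KL(p_\theta\|p_{\theta_0})$. I would close the gap by Taylor-expanding $\theta\mapsto\KL(p_\theta\|p_{\theta_0})$ around $\theta_0$: its value and gradient vanish at $\theta_0$, and differentiating under the integral sign (twice) together with $\nabla_\theta p_\theta=p_\theta\nabla_\theta\log p_\theta$ shows its Hessian there equals the outer-product-of-scores expectation defining $I(\theta_0)$, so forward and reverse KL share the same second-order coefficient. This is the only step requiring smoothness of the family and interchange of derivative and integral, which I would state as standing regularity assumptions. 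As a self-contained alternative avoiding Theorem~\ref{thm:information_geometry}, one may expand $\KL(p_\theta\|p_{\theta'})$ directly around $\theta$ to obtain $\tfrac12(\theta'-\theta)^{\top}I(\theta)(\theta'-\theta)+o(\|\theta'-\theta\|^2)$ and then replace $I(\theta)$ by $I(\theta_0)$; continuity of $\theta\mapsto I(\theta)$ gives $I(\theta)-I(\theta_0)=O(\Delta)$, and since this multiplies a factor of order $\Delta^2$ the induced error is $O(\Delta^3)$. Finally I would note that both routes actually deliver a remainder of order $o(\Delta^2)$ (indeed $O(\Delta^3)$ under $C^3$ smoothness), which is what renders the approximation meaningful, the stated $O(\Delta^2)$ being a conservative bound since both sides are themselves of order $\Delta^2$.
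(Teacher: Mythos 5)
Your proof is correct and follows essentially the same route as the paper's: rewrite the squared embedding distance as the Fisher quadratic form $\tfrac12(\theta-\theta')^{\top}I(\theta_0)(\theta-\theta')$, apply Theorem~\ref{thm:information_geometry} centered at $\mu=p_{\theta_0}$ so that the cross term is the kernel $K_{\theta_0}$, and expand the two one-sided KL terms via Lemma~\ref{lemma:kl_div}. The only (immaterial) difference is in the reverse term $\KL(p_\theta\|p_{\theta_0})$: you justify the common second-order coefficient by computing the Hessian of the reverse KL at $\theta_0$ directly, whereas the paper applies Lemma~\ref{lemma:kl_div} with the arguments swapped --- obtaining $I(\theta)$ in place of $I(\theta_0)$ --- and absorbs the discrepancy $\|I(\theta)-I(\theta_0)\|_2\,\|\theta-\theta_0\|^2$ into the $O(\Delta^2)$ remainder.
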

We also show that if the ML estimators in \eqref{eq:ml} satisfy some conditions, the FIE for multisets is injective:
\begin{theorem}\label{thm:injectivity}
$\varphi_{\theta_0}$ in~\eqref{eq:fie_proba} is injective in $\Theta$. If $\theta_{\mathrm{ML}}$ defined in~\eqref{eq:ml} is injective, then $\varphi_{\theta_0}$ in~\eqref{eq:fie_multisets} is also injective in $\Xcal^d$.
\end{theorem}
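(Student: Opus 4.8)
The plan is to reduce both claims to the invertibility of the Fisher information matrix combined with the elementary fact that a composition of injective maps is injective. For the first statement, I would begin by recalling that $I(\theta_0)$ is symmetric positive definite, as established in Section~\ref{sec:fie_multisets} just before Lemma~\ref{lemma:kl_div}. Consequently its symmetric square root $I(\theta_0)^{\nicefrac{1}{2}}$ is itself positive definite, and in particular invertible. The embedding $\varphi_{\theta_0}(p_\theta)=I(\theta_0)^{\nicefrac{1}{2}}(\theta-\theta_0)$ is therefore the composition of the affine bijection $\theta\mapsto\theta-\theta_0$ with an invertible linear map, hence injective as a function of the coordinate $\theta$.

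Next I would promote this to injectivity on $\Mcal$ itself. Since $\theta$ is a global coordinate system of $\Mcal$, the parametrization $\theta\mapsto p_\theta$ is a bijection between $\Theta$ and $\Mcal$; in particular $p_\theta=p_{\theta'}$ if and only if $\theta=\theta'$. Thus, if $\varphi_{\theta_0}(p_\theta)=\varphi_{\theta_0}(p_{\theta'})$, then $I(\theta_0)^{\nicefrac{1}{2}}(\theta-\theta_0)=I(\theta_0)^{\nicefrac{1}{2}}(\theta'-\theta_0)$, and left-multiplying by $I(\theta_0)^{-\nicefrac{1}{2}}$ yields $\theta=\theta'$, whence $p_\theta=p_{\theta'}$. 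This establishes injectivity of $\varphi_{\theta_0}$ in \eqref{eq:fie_proba}.

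For the second statement I would observe that the multiset embedding factors as $\varphi_{\theta_0}(\x)=\varphi_{\theta_0}(p_{\theta_{\mathrm{ML}}(\x)})$, that is, as the composition of the map $\x\mapsto\theta_{\mathrm{ML}}(\x)$ followed by the affine-linear map $\theta\mapsto I(\theta_0)^{\nicefrac{1}{2}}(\theta-\theta_0)$ already shown to be injective. By hypothesis $\theta_{\mathrm{ML}}$ is injective on $\Xcal^d$, and a composition of injective maps is injective, so the claim follows at once.

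I do not anticipate a genuine obstacle here, as the argument rests only on the strict positive definiteness of $I(\theta_0)$ and the global-coordinate assumption. The single point requiring care is ensuring that $I(\theta_0)^{\nicefrac{1}{2}}$ is truly invertible: this is precisely where \emph{strict} positive definiteness of the Fisher information metric, rather than mere positive semidefiniteness, is essential, since a degenerate metric would collapse directions in $\Theta$ and thereby destroy injectivity. The transfer of injectivity from the coordinate level to the level of distributions similarly hinges on $\theta$ being a faithful global coordinate, which I would flag explicitly as the structural assumption underlying the first claim.
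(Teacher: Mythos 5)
Your proposal is correct and follows exactly the route the paper intends: the paper's own proof is a one-line remark that both claims are immediate from the definition in Eq.~\eqref{eq:fie_proba}, and your argument simply fills in those details (positive definiteness of $I(\theta_0)$ gives invertibility of $I(\theta_0)^{\nicefrac{1}{2}}$, the global coordinate makes $\theta\mapsto p_\theta$ bijective, and composition with the injective $\theta_{\mathrm{ML}}$ yields the multiset case). No gap; your explicit flagging of where strict positive definiteness is needed is a useful addition rather than a deviation.
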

This lemma combined with Lemma~\ref{lemma:expressive} shows that the FIE can be as expressive as the WL isomorphism test if the ML estimator is good enough.
We can also show that the FIE for probability distributions is Lipschitz:
\begin{theorem}\label{thm:lipschitz}
$\varphi_{\theta_0}$ in~\eqref{eq:fie_proba} is $\norm{I(\theta_0)^{\nicefrac{1}{2}}}_2$-Lipschitz on $\Theta$ such that
\begin{equation*}
    \norm{\varphi_{\theta_0}(p_{\theta})-\varphi_{\theta_0}(p_{\theta'})}_2\leq \norm{I(\theta_0)^{\nicefrac{1}{2}}}_2\norm{\theta-\theta'}_2.
\end{equation*}
\end{theorem}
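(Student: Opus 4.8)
The plan is to exploit the fact that $\varphi_{\theta_0}$ in \eqref{eq:fie_proba} is an \emph{affine} function of the coordinate $\theta$, so that the Lipschitz estimate reduces to the submultiplicativity of the spectral norm. First I would write out the embedding difference explicitly. Since $\varphi_{\theta_0}(p_{\theta}) = I(\theta_0)^{\nicefrac{1}{2}}(\theta-\theta_0)$, the constant shift $I(\theta_0)^{\nicefrac{1}{2}}\theta_0$ cancels when forming the difference, leaving
\[
    \varphi_{\theta_0}(p_{\theta}) - \varphi_{\theta_0}(p_{\theta'}) = I(\theta_0)^{\nicefrac{1}{2}}(\theta-\theta').
\]
In other words, the map $\theta \mapsto \varphi_{\theta_0}(p_{\theta})$ is affine with linear part $I(\theta_0)^{\nicefrac{1}{2}}$, a well-defined symmetric positive-definite matrix because $I(\theta_0)$ is positive definite.

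The second step is to bound the right-hand side. By the definition of the operator $2$-norm (equivalently, the largest singular value), one has $\norm{Ax}_2 \leq \norm{A}_2\norm{x}_2$ for any matrix $A$ and vector $x$. Applying this with $A = I(\theta_0)^{\nicefrac{1}{2}}$ and $x = \theta-\theta'$ yields
\[
    \norm{\varphi_{\theta_0}(p_{\theta}) - \varphi_{\theta_0}(p_{\theta'})}_2 = \norm{I(\theta_0)^{\nicefrac{1}{2}}(\theta-\theta')}_2 \leq \norm{I(\theta_0)^{\nicefrac{1}{2}}}_2\norm{\theta-\theta'}_2,
\]
which is exactly the claimed inequality.

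There is essentially no obstacle here: because the embedding is affine in $\theta$, Lipschitz continuity is immediate, and the stated constant $\norm{I(\theta_0)^{\nicefrac{1}{2}}}_2$ is in fact the tightest possible, being attained whenever $\theta-\theta'$ is aligned with the top eigenvector of $I(\theta_0)^{\nicefrac{1}{2}}$. The only point worth flagging is that the argument relies on $\theta$ being known exactly, which is why the statement is confined to the embedding \eqref{eq:fie_proba} for parametric distributions; extending Lipschitzness to the multiset embedding \eqref{eq:fie_multisets} would additionally require controlling the regularity of the ML estimator $\theta_{\mathrm{ML}}$, which this theorem deliberately sidesteps.
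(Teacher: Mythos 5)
Your proof is correct and is exactly the argument the paper has in mind; the paper simply states that the result is ``straightforward from the definition of $\varphi_{\theta_0}$'' without writing out the cancellation of the $\theta_0$-shift and the operator-norm bound, which you do explicitly. Your added remarks on tightness and on why the multiset embedding is excluded are accurate but not part of the paper's (one-line) proof.
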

The Lipschitz constant plays an important role in deriving the generalization bounds of both deep networks and deep kernels, as studied in~\citet{bartlett2017spectrally,neyshabur2018pac} and the above theorem offers some insights into the generalization properties of FIE. A more complete study of the generalization bounds of FIE will be left for future work.

\subsection{Fisher information embedding induced by the manifold of Gaussian mixtures} \label{sec:gaussian-case}
In this section, we specialize our general framework to a particular statistical manifold.
Let us consider a simple Gaussian mixture as the parametric family, given by
\begin{equation}
	p_{\theta}(x):= \frac{1}{p} \sum_{j=1}^p \Ncal(x;\mu_{j}, I), \label{eq:gaussian-mixt}
\end{equation}
where we only assume $\mu_j$ to be parameters and $\theta:=\{\mu_1,\dots,\mu_p\}$. We also assume that $\theta_0=\{w_0,\dots,w_p\}$. Let $\z=\{z_1,\dots,z_n\}$ be the latent variables that determine the component from which the observation originates such that $p_{\theta}(x_i)=\sum_{j=1}^p p_{\theta}(x_i,z_i=j)$.
Then, for any multiset $\x$, the corresponding log-likelihood can be lower bounded by the Jensen inequality:
\begin{equation*}
\begin{split}
    \frac{1}{n} \sum_{i=1}^n \log p_{\theta}(x_i) &= \sum_{i=1}^n \log \left( \sum_{j=1}^p \alpha_{ij} \frac{p_{\theta}(x_i,z_i=j)}{\alpha_{ij}} \right) \\
    &\geq \sum_{i=1}^n \sum_{j=1}^p \alpha_{ij}\log\frac{p_{\theta}(x_i,z_i=j)}{\alpha_{ij}},
\end{split}
\end{equation*}
for any $\boldsymbol{\alpha}\in\Pi:=\{\alpha_{ij}\geq 0,\sum_{j}\alpha_{ij}=1 \}$. The EM algorithm consists of the following two steps through a maximization-maximization process:
\begin{itemize}
	\item \textbf{E-step:} This step consists of maximizing the right-handed term with respect to $\alpha$, when fixing $\theta=\theta_0$ leading to:
	\begin{equation*}
		\min_{\boldsymbol{\alpha}\in\Pi}-\log p_{\theta_0}(x_i,z_i=j) - H(\boldsymbol{\alpha}),
	\end{equation*}
	where $H(\boldsymbol{\alpha}):=-\sum_{j}\alpha_{ij}\log\alpha_{ij}$ denotes the entropy. 
        Without further constraints on $\boldsymbol{\alpha}$, one can show that the optimal $a_{ij}$'s are
        \begin{equation*}
            \alpha_{ij}=p_{\theta}(z_i=j|x_i)=\frac{\Ncal(x_i,w_j,\Sigma_j)}{\sum_{l=1}^p \Ncal(x_i,w_l,\Sigma_l)}.
        \end{equation*}
        As argued in the Appendix, one can add additional constraints on $\boldsymbol{\alpha}$, \eg to avoid collapsed solutions. For particular constraints, the optimal $a_{ij}$'s can be found via solving an optimal transport problem. 
        
	\item \textbf{M-step:} This step consists of maximizing the right-handed term with respect to $\theta$. In fact, $\theta$ takes the same form as a weighted MLE for a normal distribution, thus:
	\begin{equation}
		\mu_j=\frac{\sum_i \alpha_{ij}x_i}{\sum_i \alpha_{ij}}.
	\end{equation}
\end{itemize}
Under the assumption that the optimal parameter $\theta_{\mathrm{ML}}$ is close to $\theta_0$, the procedure converges in one or few iterations. 

As shown in \citet{sanchez2013image}, under the condition that the components of the Gaussian mixture are well separated, the Fisher information matrix for the parametric family we consider (Eq. \eqref{eq:gaussian-mixt}) can be approximated by $I(\theta_0) = \frac{1}{p} I$.
An approximate form for the FIE on the manifold of Gaussian mixtures is then given by:
\begin{equation}\label{eq:gmm_fie}
	\varphi_{\theta_0}(\x)=\frac{1}{\sqrt{p}} \left( \mathbin\bigg\Vert_{j=1}^p  \frac{\sum_i \alpha_{ij}x_i}{\sum_i \alpha_{ij}} \ \ - \theta_0 \right).
\end{equation}
It is worth noting that the sum of weights $\nicefrac{\alpha_{ij}}{\sum_i \alpha_{ij}}$ equals 1, resulting in a new attention-like mechanism. 
The parameter of this embedding is the parameter of the anchor distribution $\theta_0$, which can be learned in either unsupervised or supervised way, as described in Section~\ref{sec:learning}. While a similar embedding was explored in~\citet{kim2021differentiable}, our embeddings have a geometric interpretation.
Additionally, the $p$ components can be interpreted as ``heads'', but unlike in GATs, they are not independent. This new form of attention mechanism, as demonstrated in our experimental evaluation in Section~\ref{sec:experiments}, yields comparable classification accuracy to GATs.

We discuss here briefly the complexity: each iteration of the EM algorithm involves computing the $np$ entries of the matrix $\boldsymbol{\alpha}$, where $n$ is the number of elements in $\boldsymbol{x}$ and $p$ is the number of components in the mixture model.
Each entry $\alpha_{ij}$ can be computed in constant time after some pre-processing, which leads to a complexity of $O(np)$ per iteration.

As discussed in Section~\ref{sec:multisets}, the kernel mapping $\psi_{\rm multiset}$ can be obtained via the composition of multiple kernel mappings.  
In such a way, one can add addtional non-linearities to the FIE for multisets, making it reminiscent of the attention mechanisms used in \citet{velickovic2018graph, brody2022how}.
This can be achieved by defining $\psi^{}_{\rm multiset} = \psi \circ \varphi_{\theta_0}$, with $\psi$ a nonlinear function.
For example, one can use $\psi(x) = {\rm ReLU}(W^\top x)$, or the exponential dot-product kernel used in \cite{chen2020convolutional} to obtain an unsupervised embedding.

\subsection{Learning algorithms}\label{sec:learning}
We provide both unsupervised and supervised approaches to learn the main parameters $\theta_0$ in the FIE in \eqref{eq:gmm_fie}. 

\paragraph{Unsupervised learning}
Following the definition, $p_{\theta_0}$ must be close to any data densities $p_{\x}$ and thus well characterize the the context distribution. As a consequence, we can fit the statistical model $p_{\theta_0}$ on the union of the multisets from the training dataset. Particularly in the case of neighborhood multisets in Section~\ref{sec:examples}, $\theta_0$ can be simply learned by fitting the Gaussian mixture model on a subset of node features sampled from the training dataset. The same process can be used to learn $\theta_0$ in layer $t>1$ after computing the node embeddings at layer $t-1$.
Furthermore, we remark that learning the embeddings can be carried out in \emph{a single forward pass}, and is therefore much more efficient than end-to-end supervised learning approaches by GNNs. 
In practice, we observe that k-means, known as a special case of EM algorithms~\citep{lucke2019k}, performs comparably while being less computationally costly and thus we use k-means to learn $\theta_0$ for all the experiments.

\paragraph{Supervised learning}
Similar to previous studies~\citep{chen2020convolutional,kim2021differentiable}, $\theta_0$ can also be viewed as model parameters and can be learned end-to-end by minimizing an objective function for a downstream task. In this case, $\theta_0$ (from all FIE layers) will be learned with all the other parameters with back-propagation, including the parameters in the non-linear function and those in the last linear layer.

\begin{figure}[t]
\centering
     \begin{subfigure}
         \centering
         \includegraphics[width=0.23\textwidth]{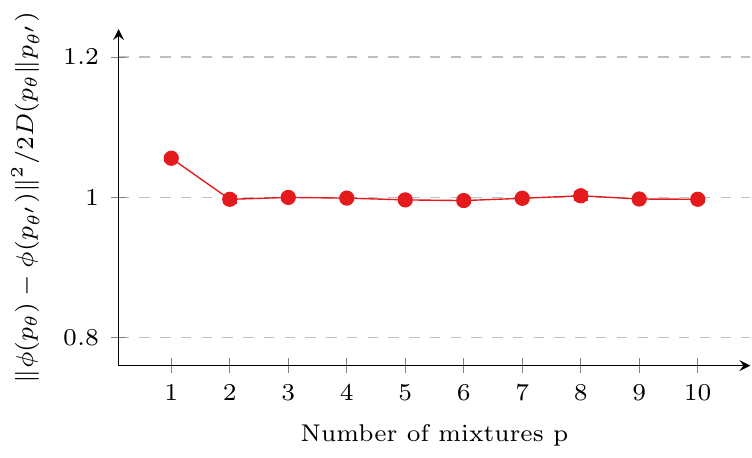}
     \end{subfigure}
     \hfill
     \begin{subfigure}
         \centering
         \includegraphics[width=0.23\textwidth]{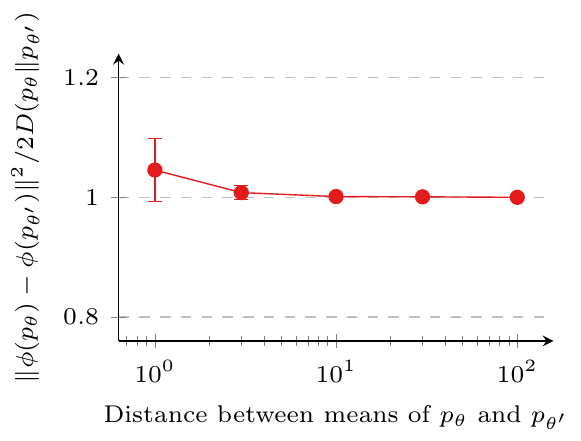}
     \end{subfigure}
     \hfill
     \caption{Simulation study on the ratio between the KL divergence of two distributions and the squared distance between their embeddings, varying some parameters of the underlying distributions.} \label{fig:sim1}
\end{figure}

\begin{figure}[t]
\centering
     \begin{subfigure}
         \centering
         \includegraphics[width=0.23\textwidth]{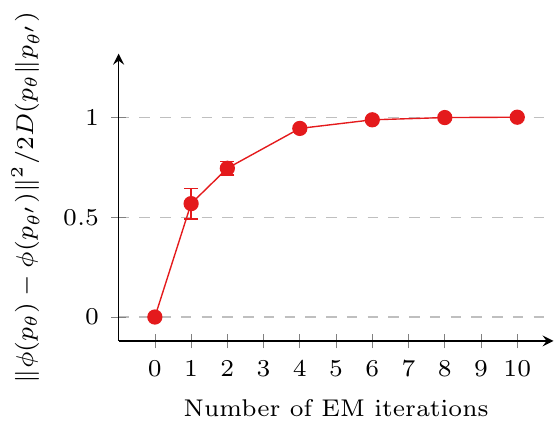}
     \end{subfigure}
     \hfill
     \begin{subfigure}
         \centering
         \includegraphics[width=0.23\textwidth]{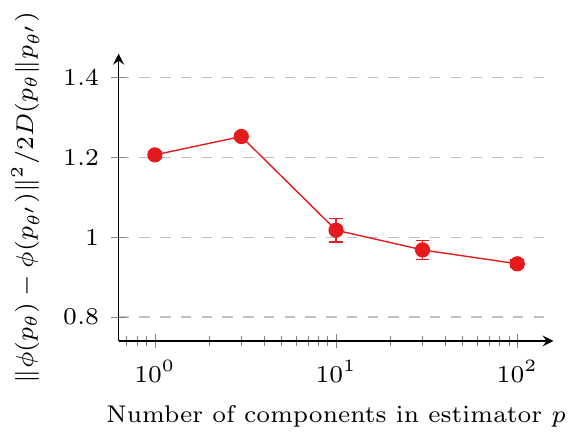}
     \end{subfigure}
     \caption{Simulation study on the ratio between the KL divergence between two distributions and the squared distance between their embeddings, varying some parameters of embedding method.} \label{fig:sim2}
\end{figure}

\section{Experiments}\label{sec:experiments}
In this section, we first validate our theoretical findings on synthetic datasets. Then, we compare FIE to existing GNNs on real-world datasets and show that FIE can achieve comparable performance to GATs. Additionally, we show that unsupervised FIE embedding combined with a state-of-the-art boosting method can achieve significant improvement compared to existing unsupervised node embedding methods and similar performance to its supervised counterpart. 

\begin{table*}[ht]
    \centering
    \caption{Classification accuracy on real world datasets. The reported values are taken from the literature (denoted with an asterisk) or are obtained from our reimplementations. In this latter case, we report the mean and the standard deviation over 10 runs. }
    \label{tab:classification}
    \resizebox{.85\textwidth}{!}{
    \begin{tabular}{lcccccc}\toprule
        & \multicolumn{3}{c}{Semi-supervised (transductive) learning tasks} & \multicolumn{3}{c}{Supervised learning tasks} \\
        Method & Cora & Citeseer & Pubmed & Reddit & ogbn-arxiv & ogbn-products \\
        \midrule
        WWL & $76.60 \pm 0.00$ & $66.60  \pm 0.00$ & $78.70 \pm 0.00$ & $ 96.13 \pm 0.04 $ & $64.67 \pm 0.08$ & $62.53 \pm 0.11$ \\ %
        Node2vec & $73.55 \pm 0.07$ & $54.01 \pm 1.16$ & $71.22 \pm 1.04$ & $95.58 \pm 0.08$ & $70.67 \pm 0.09$ & $74.77 \pm 0.14$  \\ %
        \midrule
        FIE unsup & $\mathbf{82.36\pm0.28}$ & $72.02\pm1.34$ & $\mathbf{79.22\pm0.17}$ & $\mathbf{96.61\pm0.02}$  & $72.17\pm0.07$ & $79.24\pm0.11$ \\
        \midrule\midrule
        MLP & $55.1$$^{\rm *}$ & $46.5$$^{\rm *}$ & $71.4$$^{\rm *}$ & - & $55.50 \pm 0.23$$^{\rm *}$ & $61.06 \pm 0.08$$^{\rm *}$ \\ %
        Label Propagation & $68.0$$^{\rm *}$ & $45.3$$^{\rm *}$ & $63.0$$^{\rm *}$ & - & $68.32 \pm 0.00$$^{\rm *}$ & $74.34 \pm 0.00$$^{\rm *}$  \\ %
        GCN & $81.4 \pm 0.50$$^{\rm *}$ & $70.9 \pm 0.50$$^{\rm *}$ & $79.0 \pm 0.3$$^{\rm *}$  &  - & $71.74\pm0.29$$^{\rm *}$ & $78.97\pm0.33$$^{\rm *}$ \\ %
        GraphSAGE & - & - & - & $94.32$$^{\rm *}$ & $71.49\pm0.27$$^{\rm *}$ & $78.70\pm0.36$$^{\rm *}$ \\ %
        GAT & $81.59 \pm 0.67$ & $70.08 \pm 0.58$ & $79.14 \pm 0.92$ &  $96.37  \pm0.18$ &  $71.59\pm0.38$$^{\rm *}$  & $79.04\pm1.54$$^{\rm *}$ \\ %
        GATv2 & $82.04 \pm 0.82$ & $69.72 \pm 1.03$ & $75.66 \pm 0.93$ & $96.58 \pm 0.01$ & $71.87\pm0.25$$^{\rm *}$ & $\mathbf{80.63\pm0.70}$$^{\rm *}$ \\ %
        \midrule
        FIE sup & $81.79\pm0.99$ & $\mathbf{72.51\pm0.34}$ & $78.35\pm0.26$ & $96.25 \pm 0.06$ & $\mathbf{72.39\pm0.21}$ & $79.25\pm0.25$ \\ \bottomrule
    \end{tabular}
    }
\end{table*}

\subsection{Evaluation on synthetic datasets} \label{sec:simstudy}
We provide empirical evidence that substantiates the results of Theorem~\ref{thm:closeness}. 
In the case of Gaussian mixtures, we show that the $\ell_2$-distance in the FIE space approximates well the KL divergence between the underlying distributions. We study the approximation quality under various configurations of the distributions and embedding hyperparameters. 

\paragraph{Experimental setup}
We conduct a simulation study on synthetic data. We generate samples from two Gaussian mixture distributions, $p_{\theta}$ and $p_{\theta'}$, and use the Goldberg approximation~\citep{goldberger2003efficient} to compute an approximation of the KL divergence between the two Gaussian mixtures as the ground truth. Using these generated samples, we compute the FIEs for the two distributions, as described in Eq.~\eqref{eq:gmm_fie}. We then analyze the relationship between the $\ell_2$-distance of the two embeddings and $2 \mathrm{KL}(p_{\theta} | p_{\theta'})$ as we vary the parameters of the distributions or the embedding hyperparameters.

\paragraph{Sensitivity to underlying distributions}
Here, we keep the hyperparameters of FIE fixed and only vary the underlying distributions to understand under which conditions the FIE accurately approximates the KL divergence. We generate distributions $p_{\theta}$ and $p_{\theta'}$ as mixtures of three Gaussians with identity covariance. We vary respectively (i) the number of components, (ii) the distance between the mean of $p_{\theta}$ and of $p_{\theta'}$, and (iii) the distance among the components in $p_{\theta'}$. As shown in Figure~\ref{fig:sim1}, the squared embedding distances closely match the KL divergence $2 \mathrm{KL}(p_{\theta} | p_{\theta'})$, as predicted by Theorem~\ref{thm:closeness}.

\paragraph{Sensitivity to embedding hyperparameters}
In this set of experiments, we fix $p_{\theta}$ and $p_{\theta'}$ and vary the parameters of our embedding method. Specifically, we alter the number of components, $p$, in the Gaussian mixtures between 1 and 100, and the number of EM iterations, $M$, in the maximum likelihood estimation between 0 and 10. The results in Figure~\ref{fig:sim2} demonstrate that the squared distance between the embeddings approaches the KL divergence as the number of EM iterations increases, as the estimation error decreases. Additionally, we observe that the quality of the approximation improves as the number of components in the Gaussian mixtures increases.

\begin{figure*}[ht]
    \centering
    \includegraphics[width=.25\textwidth]{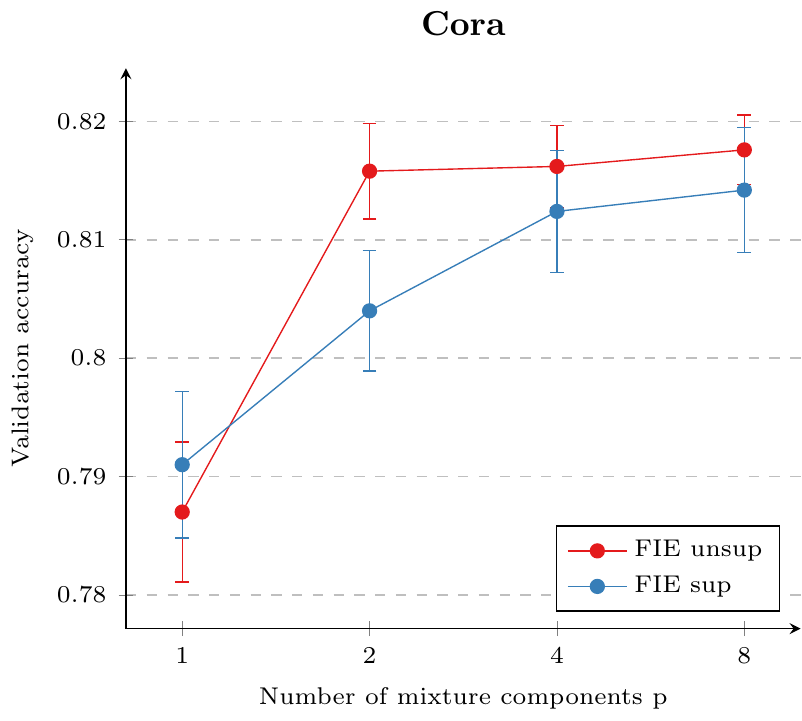}
    \includegraphics[width=.25\textwidth]{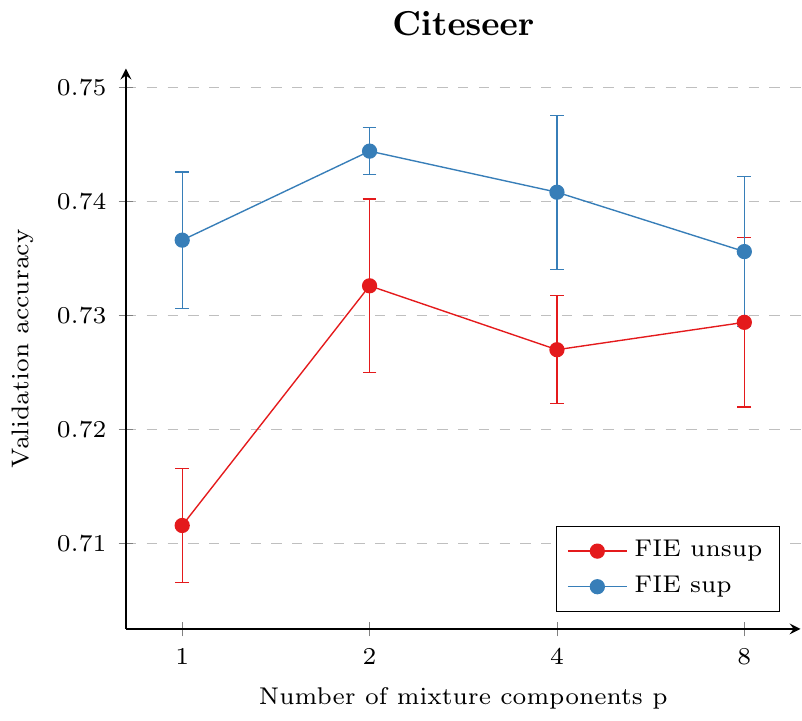}
    \includegraphics[width=.25\textwidth]{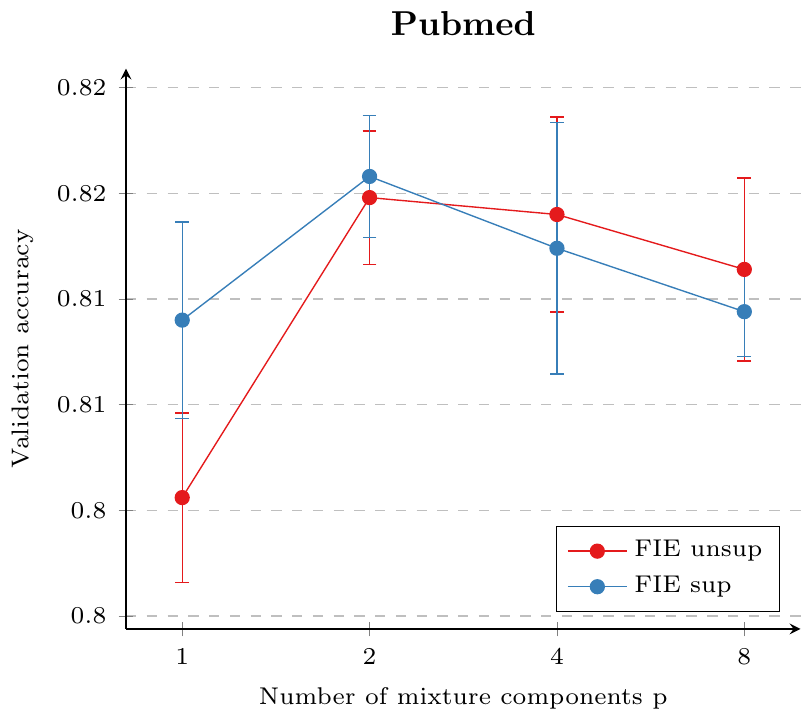}
    \caption{Effect of number of mixture components $p$ on the validation accuracy for semi-supervised learning datasets.}
    \label{fig:num_mixtures}
\end{figure*}

\subsection{Evaluation on real-world datasets}
We evaluate FIE and compare its variants to state-of-the-art attention-based GNNs and unsupervised node embedding methods on several real-world datasets for (semi-) supervised node classification.

\paragraph{Datasets and experimental setup}
We assess the performance of our method with six widely used benchmark datasets for node classification, including Cora, Citeseer, Pubmed~\citep{sen2008collective} as semi-supervised transductive learning datasets and Reddit~\citep{hamilton2017inductive}, ogbn-arxiv~\citep{hu2020ogb1}, ogbn-products~\citep{hu2020ogb1} as medium- or large-scale supervised learning datasets. 

We compare our method to both unsupervised and supervised methods for node representation learning. Unsupervised node embedding methods include WWL~\citep{togninalli2019wasserstein}, which use simple average aggregation of neighborhoods, and Node2vec~\citep{grover2016node2vec}. The supervised comparison partners include MLP~\citep{hu2020ogb1}, Label propagation~\citep{zhu2002learning}, GCN~\citep{kipf2017semisupervised}, GraphSAGE~\citep{hamilton2017inductive}, GAT~\citep{velickovic2018graph}, and GATv2~\citep{brody2022how}.

All results for the comparison partners are either taken from the original paper (denoted with an asterisk in the table) or obtained by our reimplementation if not available. All results are computed from 10 runs using different random seeds with the optimal hyperparameters selected on the validation set. Full details on the datasets, experimental setup and implementation details can be found in the Appendix.

\begin{figure*}[ht]
    \centering
    \begin{subfigure}
         \centering
         \includegraphics[width=0.28\textwidth]{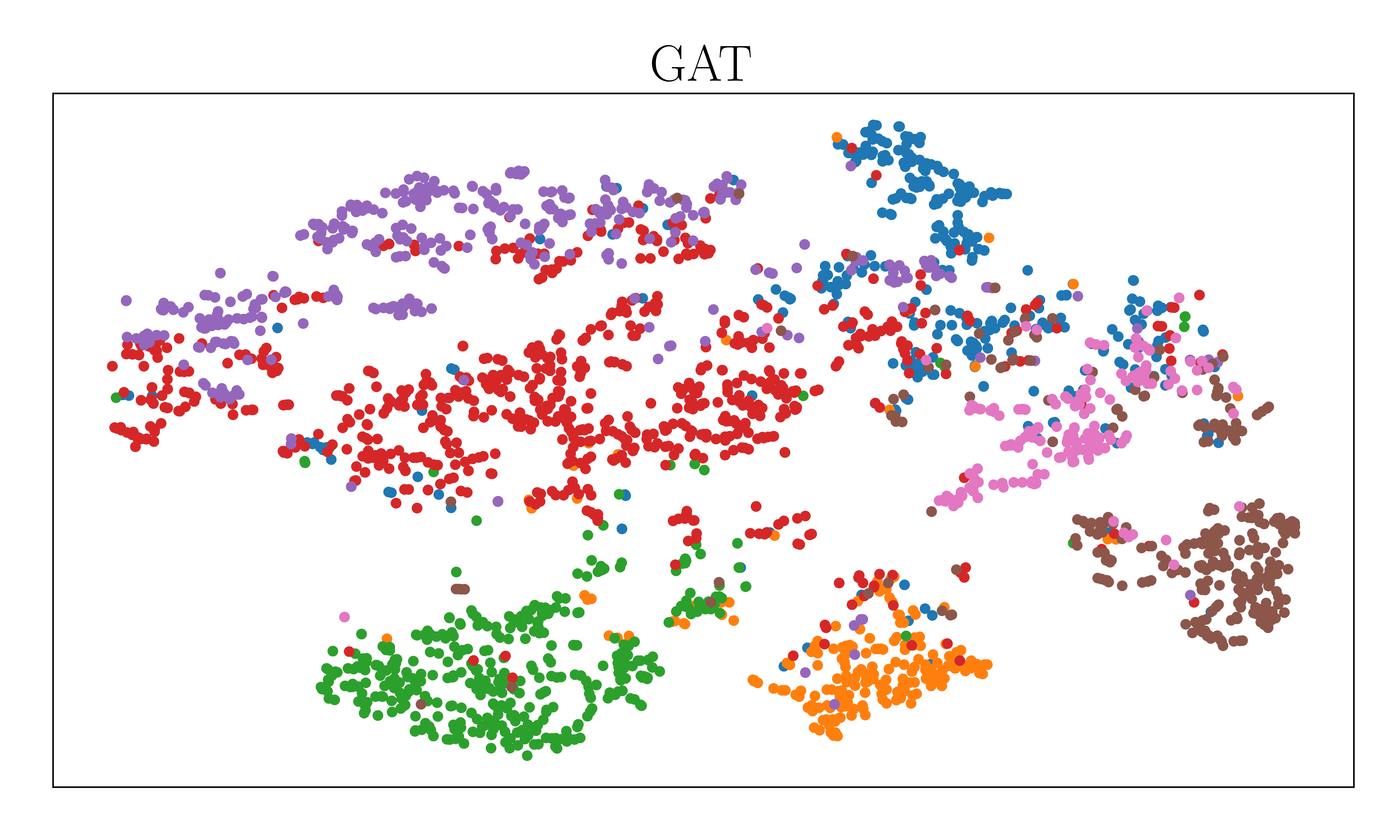}
     \end{subfigure}
     \hfill
     \begin{subfigure}
         \centering
         \includegraphics[width=0.28\textwidth]{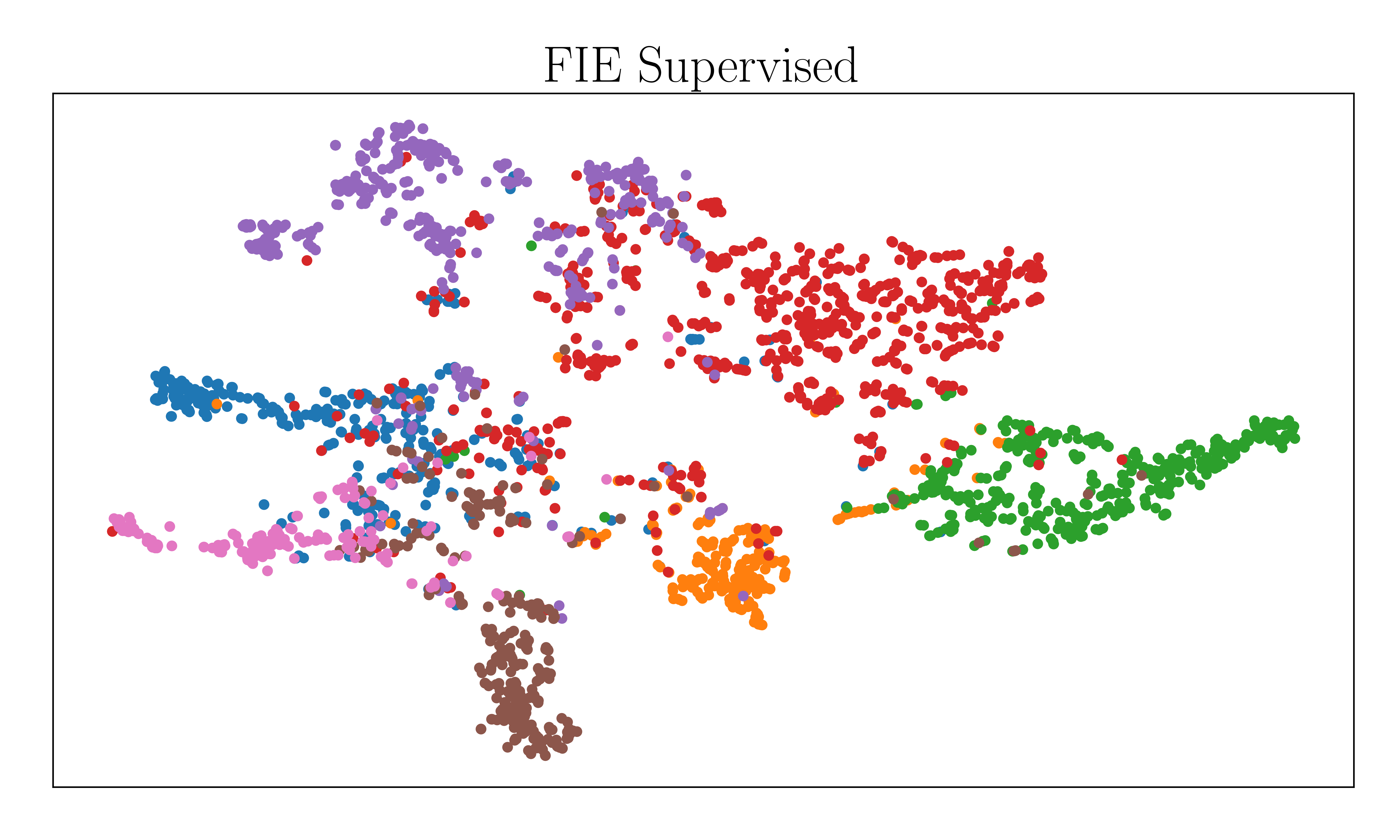}
     \end{subfigure}
     \hfill
     \begin{subfigure}
         \centering
         \includegraphics[width=0.28\textwidth]{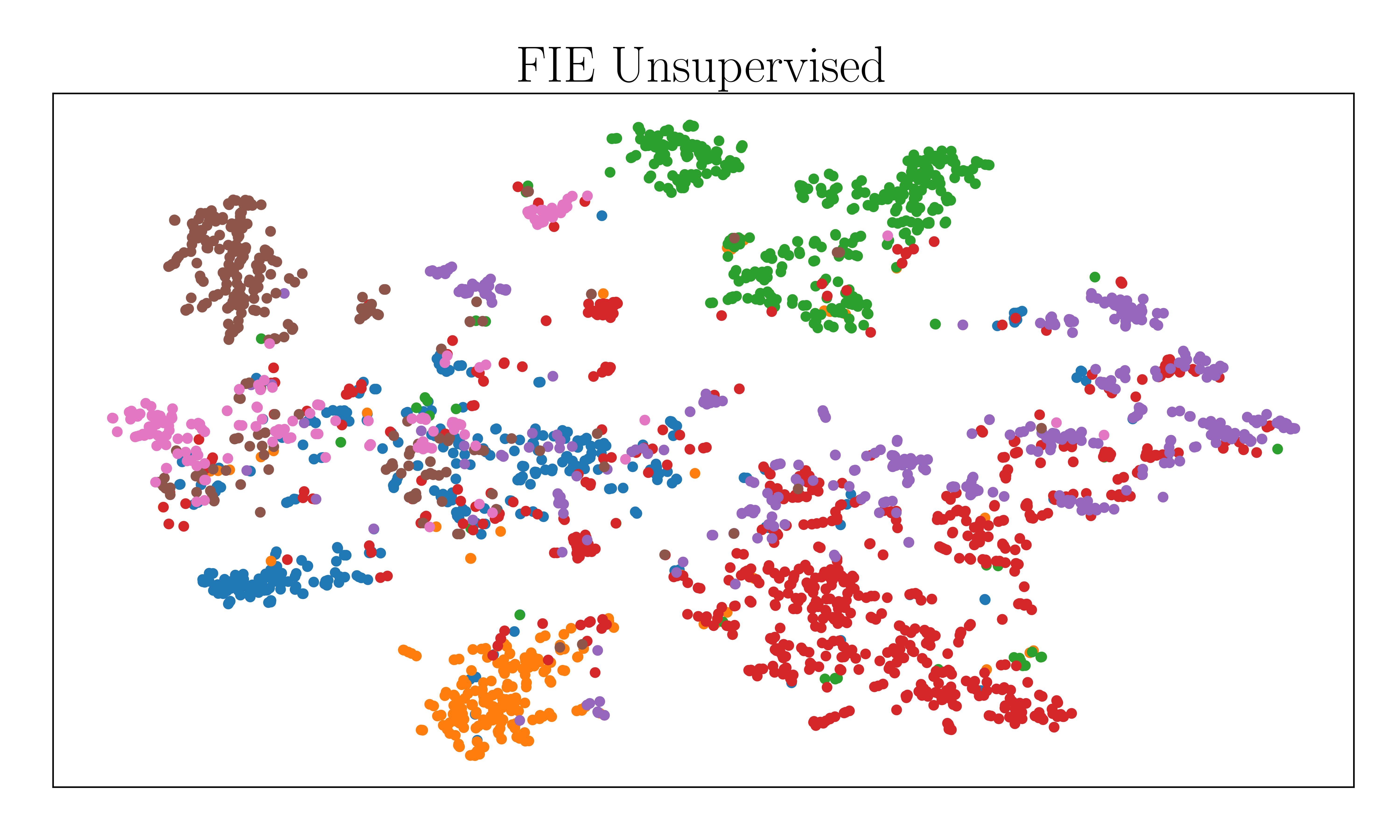}
     \end{subfigure}
    \caption{Left and center panels show the t-SNE visualization of the node embedding space on the last hidden layer (with hidden dimension 64) for GAT and supervised FIE on the Cora dataset. Right panel shows the t-SNE visualization of the unsupervised FIE (dimension 64) on the Cora dataset. The embedding colors represent the node labels, and are consistent throughout the plots.}
    \label{fig:tsne}
\end{figure*}

\paragraph{Unsupervised node embedding}
In Section~\ref{sec:learning}, we propose an unsupervised method for learning the anchor distribution parameter $\theta_0$ through a layer-by-layer k-means algorithm, similar to \citet{chen2020convolutional}. We sample a subset of 300,000 nodes from each layer and perform k-means on the subset to compute $\theta_0$. Once learned, the node embeddings for that layer can be computed. This method requires only \emph{one single forward pass} to obtain the final node embeddings. We then train a classifier using the concatenated node embeddings across all layers, as outlined in Eq.~\eqref{eq:multilayer_kernel}. For semi-supervised learning datasets, we use a logistic regression model, which is less prone to overfitting. For large supervised datasets, we use a LightGBM classifier~\citep{ke2017lightgbm}, the hyperparameters of which are automatically tuned by FLAML~\citep{wang2021flaml}. The same classifiers are used for other comparison partners.

Our results presented in Table~\ref{tab:classification} demonstrate that our unsupervised embedding (FIE unsup) consistently outperforms existing unsupervised methods by a large margin over all datasets. More remarkably, our unsupervised embedding achieves comparable or better performance compared to supervised methods, including GAT variants and its supervised counterpart, showing the effectiveness of our unsupervised learning strategy. The results suggest that learning a good classifier on a general-purpose node representation in a decoupled way can lead to strong performance.

\paragraph{Supervised node embedding}
In addition to our unsupervised method for learning the anchor distribution parameter $\theta_0$, we also propose a supervised approach for training the entire model end-to-end, as described in Section~\ref{sec:learning}. We utilize a cross entropy loss and the Adam optimizer~\citep{kingma2015adam} for optimization. Our unsupervised embedding method serves as a natural initialization for $\theta_0$ and we use it for semi-supervised learning datasets. 
Our results, shown in Table~\ref{tab:classification}, demonstrate that our supervised FIE method achieves comparable or better performance compared to GAT variants, without the needing complex tricks in the model architecture such as using Leaky ReLU.

\paragraph{Effect of number of mixture components}
In this study, we investigate the effect of the number of components $p$ used in the Gaussian mixture model on our FIE approach. Specifically, we aim to demonstrate that the aggregation mechanism in Eq.~\eqref{eq:gmm_fie} derived from our approach is superior to simple average pooling. Figure~\ref{fig:num_mixtures} illustrates the impact of varying the number of components on validation accuracy for both supervised and unsupervised variants across several datasets. As the number of components $p$ decreases to $1$, our FIE method approaches average pooling of the neighbors. However, as $p$ increases, our FIE behaves similarly to other multi-head attention-based message passing methods, as the concatenation of weighted averages of neighbors. Our results suggest that using $p>1$ leads to substantial improvements in performance, with the optimal value of $p$ varying depending on the dataset. For example, on Cora, $p=8$ produces the best results, while the best performance is attained at $p=2$ for Citeseer and Pubmed. This study illustrates the superiority of our attention mechanism to average pooling and suggests that the optimal value of $p$ should be selected through cross validation.%

\paragraph{t-SNE visualization of embedding spaces}
To further demonstrate the effectiveness of our FIE method qualitatively, we present a visualization of the node embedding space for both unsupervised and supervised training modes. Figure~\ref{fig:tsne} shows the t-SNE visualization of the node embeddings for the Cora dataset using both GAT and FIE. All methods produce well-separated clusters for different classes. The clusters in the FIE embeddings obtained via unsupervised training are less distinct, as expected since the method has no access to class labels. Despite this, the resulting embedding is able to achieve classification performance comparable to or even better than the other methods.

\section{Discussion}
We have proposed the Fisher information embedding model, a novel class of attention-based node embeddings that can be learned flexibly in either supervised or unsupervised settings. By leveraging tools from information geometry, our method constructs a new attention mechanism, offering deeper insights into the geometric aspects of attention-based models. Although supervised node embedding methods are widely used in practice, our unsupervised node embeddings demonstrate comparable performance. Our work is related to a recent class of graph models, namely graph transformers~\citep{ying2021transformers,mialon2020trainable,chen2022structure,rampavsek2022recipe}, which also use an attention-based operator~\citep{vaswani2017attention}, called self-attention, on the full set of nodes. However, rather than pooling the multiset, the self-attention returns another multiset of node features. Our work provides a first step towards understanding self-attention from an information geometry perspective.

A limitation of our approach is its reliance on Gaussian mixtures for parameter estimation using the EM algorithm. However, our framework has potential for incorporating a wider range of distribution families, which could lead to enhanced results.
Exploring these extensions, alongside investigation into the generalization bounds of the FIE model, presents a promising direction for future research.

\section*{Acknowledgements}
The authors would like to thank Dr.\ Carlos Oliver, Dr. Armin Lambacher and the reviewers for their insightful feedback. %

\bibliography{mybib}
\bibliographystyle{icml2023}

\newpage
\appendix
\onecolumn

\vspace*{0.3cm}
\begin{center}
    {\huge Appendix}
\end{center}
\vspace*{0.5cm}

This appendix provides both theoretical and experimental materials and is organized as follows: Section~\ref{sec:supp_background} provides a more detailed background on graph attention networks. Section~\ref{sec:supp_proofs} presents proofs for all the Lemmas and theorems. Section~\ref{sec:supp_estep} provides variations for the E-step of FIE on the manifold of Gaussian mixtures. Section~\ref{sec:supp_exp} provides experimental details and additional results.

\section{Background on graph attention networks}\label{sec:supp_background}
GATs \cite{velickovic2018graph} are GNNs that have an attention mechanism to weight the contributions of each of the nodes when aggregating features from a neighborhood in the message passing framework. 
Indeed, the GAT model uses a score function $e(h_i, h_j) = {\rm LeakyReLU}(a^\top [W h_i \| W h_j] )$ to decide the importance of the features $h_j$ of a neighbor $j$ of node $i$. 
$a$ and $W$ are learnable parameters. 
The attention score for edge $(i,j)$ is then computed as $\alpha_{ij} = {\rm softmax}_{j}(e(h_i, h_j))$.

Then, GAT aggregates the features from the neighborhood $\mathcal{N}_i$ of $i$ by computing the weighted sum $$h'_i = \sigma \left(  \sum_{j \in \mathcal{N}_{i}} \alpha_{ij} W h_j  \right)$$

Subsequently, \cite{brody2022how} proposed a different attention mechanism, dubbed GATv2, that computes the attention scores as $e(i, j) = a^\top {\rm LeakyReLU}( W [ h_i \| h_j] )$, which increases the representative power of the model.

\section{Proofs}\label{sec:supp_proofs}
\subsection{Proof of Lemma~\ref{lemma:expressive}}
\begin{proof}
We follow the arguments by~\citet[Theorem 3]{xu2018powerful}. The WL test applies a predetermined injective hash function $g$ to
update the WL node labels $a^{(t)}(v)$:
\begin{equation*}
    a_t(v)=g\left(\left\{\left(a_{t-1}(v),a_{t-1}(u)\right):u\in\Ncal(v)\right\}\right).
\end{equation*}
And our kernel embedding at iteration $t$ in the example of neighborhood multisets is given by
\begin{equation*}
    \psi_t(v)=\psi_{\mathrm{multiset}}^{(t)}(\left\{(\psi_{t-1}(v),\psi_{t-1}(u)):u\in\Ncal(v)\right\}).
\end{equation*}
We show by induction that, for any iterations $t$, there always exists an injective function $\varphi$ such that $\psi_t(v)=\varphi(a_t(v))$. This apparently holds for $t=0$ as $\psi_0=a_0=a$. Now let us assume that this condition holds for $t-1$, we will show it also holds for $t$. Substituting $\psi_{t-1}(v)$ with $\varphi(a_{t-1}(v))$ gives us:
\begin{equation*}
     \psi_t(v)=\psi_{\mathrm{multiset}}^{(t)}(\left\{(\varphi(a_{t-1}(v)),\varphi(a_{t-1}(u))):u\in\Ncal(v)\right\}).
\end{equation*}
Since the composition of injective functions is injective, we have $\psi=\psi_{\mathrm{multiset}}^{(t)}\circ\varphi$ is injective, where by abuse of notation, $\varphi$ is applied element-wise to the multiset. Then, we have
\begin{equation*}
    \psi_t(v)=\psi\circ g^{-1}(a_t(v)),
\end{equation*}
such that $\psi\circ g^{-1}$ is injective, as being the composition of injective functions. Therefore, we conclude the lemma.
\end{proof}

\subsection{Proof of Lemma~\ref{lemma:generalization}}
This is a classical result, and its proof can be found in, \eg \citet{boucheron2005theory}.

\subsection{Proof of Theorem~\ref{thm:information_geometry}}
\begin{proof}
The proof is adapted from~\citet[Theorem 3.20]{amari2000methods}. Let $(g,\nabla,\nabla^*)$ be a dualistic structure induced by the divergence $D$ as shown in~\citep{amari2000methods}. We denote by $\exp_{\mu}$ and $\exp_{\mu}^*$ respectively the exponential maps for $\nabla$ and $\nabla^*$. Theorem 3.20 showed that
	\begin{equation}
		D(u\|\mu)+D(\mu\| v)-D(u\| v)=\langle \exp_{\mu}^{-1}(u),\exp_{\mu}^{*-1}(v)\rangle_g +o(\Delta^3).
	\end{equation}
	We also have $\|R_{\mu}^{-1}(u)-\exp_{\mu}^{-1}(u)\|_g=O(\Delta^2)$ and $\|R_{\mu}^{-1}(v)-\exp_{\mu}^{*-1}(v)\|_{g}=O(\Delta^2)$ since $R_{\mu}^{-1}(u)=u-\mu$. Finally, we have
	\begin{equation*}
	\begin{aligned}
	    & \left|\langle R_{\mu}^{-1}(u), R_{\mu}^{-1}(v)\rangle_{g} - \langle \exp_{\mu}^{-1}(u),\exp_{\mu}^{*-1}(v)\rangle_g\right| \\
	    =& \left|\langle R_{\mu}^{-1}(u)-\exp_{\mu}^{-1}(u), R_{\mu}^{-1}(v)\rangle_{g} - \langle \exp_{\mu}^{-1}(u),\exp_{\mu}^{*-1}(v)-R_{\mu}^{-1}(v)\rangle_g\right| \\
	    \leq & \left|\langle R_{\mu}^{-1}(u)-\exp_{\mu}^{-1}(u), R_{\mu}^{-1}(v)\rangle_{g}\right| + \left|\langle \exp_{\mu}^{-1}(u),\exp_{\mu}^{*-1}(v)-R_{\mu}^{-1}(v)\rangle_g\right| \\
	    \leq & \|R_{\mu}^{-1}(v)\|_g\|R_{\mu}^{-1}(u)-\exp_{\mu}^{-1}(u)\|_g + \|\exp_{\mu}^{-1}(u)\|_g\|R_{\mu}^{-1}(v)-\exp_{\mu}^{*-1}(v)\|_{g}=O(\Delta^2),
	\end{aligned}
	\end{equation*}
	where the first inequality uses triangle inequality and the second inequality uses the Cauchy-Schwarz inequality.
\end{proof}

\subsection{Proof of Lemma~\ref{lemma:kl_div}}
Fisher information is the second derivative of KL divergence, which is a commonly known result in information theory. A proof can be found in \eg \citet[Page 87]{gourieroux1995statistics}.

\subsection{Proof of Theorem~\ref{thm:closeness}}
\begin{proof}
Following the definition in Eq.~\eqref{eq:fie_proba}, we have
\[
\begin{aligned}
\frac{\norm{\varphi_{\theta_0}(p_{\theta})-\varphi_{\theta_0}(p_{\theta'})}^2}{2}&=\frac{\norm{I(\theta_0)^{\nicefrac{1}{2} }(\theta-\theta')}^2}{2} \\
&=\frac{1}{2}(\theta-\theta')^{\top} I(\theta_0)(\theta-\theta').
\end{aligned}
\]
When both $\theta$ and $\theta'$ approach $\theta_0$, Lemma~\ref{lemma:kl_div} suggests that
\[
\mathrm{KL}(p_{\theta}\|p_{\theta_0})=\frac{1}{2}(\theta-\theta_0)^{\top} I(\theta)(\theta-\theta_0)+o\|\theta-\theta_0\|^2,
\]
and
\[
\mathrm{KL}(p_{\theta_0}\|p_{\theta'})=\frac{1}{2}(\theta'-\theta_0)^{\top} I(\theta_0)(\theta'-\theta_0)+o\|\theta'-\theta_0\|^2.
\]
We also have
\[
\left|\frac{1}{2}(\theta-\theta_0)^{\top} I(\theta)(\theta-\theta_0) - \frac{1}{2}(\theta-\theta_0)^{\top} I(\theta_0)(\theta-\theta_0)\right|=\frac{1}{2}\|I(\theta)-I(\theta_0)\|_2\|\theta-\theta_0\|_2^2,
\]
thus
\[
\mathrm{KL}(p_{\theta}\|p_{\theta_0})=\frac{1}{2}(\theta-\theta_0)^{\top} I(\theta_0)(\theta-\theta_0)+O\|\theta-\theta_0\|^2,
\]
By substituting $\mu$ by $p_{\theta_0}$, $u$ by $p_{\theta}$ and $v$ by $\theta'$ in Theorem~\ref{thm:information_geometry}, we have
\[
\mathrm{KL}(p_{\theta}\|p_{\theta_0})+\mathrm{KL}(p_{\theta_0}\|p_{\theta'})-\mathrm{KL}(p_{\theta}\|p_{\theta'})=(\theta-\theta_0)^{\top} I(\theta_0)(\theta'-\theta_0)+O(\Delta^2),
\]
where $\Delta=\max\{\|\theta-\theta_0\|,\|\theta'-\theta_0\|\}$. Finally, by substituting the KL divergence terms, we obtain the expected expansion.

\end{proof}

\subsection{Proof of Theorem~\ref{thm:injectivity} and Theorem~\ref{thm:lipschitz}}
The proof of two theorems are straightforward from the definition of $\varphi_{\theta_0}$ in Eq.~\eqref{eq:fie_proba}.

\subsection{Relationship between $\theta_{\mathrm{ML}}$ and $\theta_{\mathrm{ML}_{\theta_0}}$}
Here, we show that the estimator $\theta_{\mathrm{ML}_{\theta_0}}$ is a good proxy of the ML estimator.
\begin{lemma}
Let
    \begin{equation}
	\theta_{\mathrm{ML}_{\theta_0}}(\x):= \argmax_{\theta\in\Mcal}\E_{z|x,\theta_0}[\log p_{\theta}(x,z)],
\end{equation}
where $p_{\x}$ denotes the true density of $\x$. 
This is a good proxy for the maximum likelihood estimation of $\theta$.
\end{lemma}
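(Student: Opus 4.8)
The plan is to recognize $\theta_{\mathrm{ML}_{\theta_0}}$ as the output of a single EM iteration started at $\theta_0$, and then to exploit the classical monotonicity and fixed-point properties of EM to tie it to the true maximizer $\theta_{\mathrm{ML}}$. The starting point is the variational decomposition of the marginal log-likelihood: for any distribution $q(z)$ over the latent variable,
\begin{equation*}
\E_{x\sim p_{\x}}[\log p_{\theta}(x)] = \mathcal{F}(q,\theta) + \E_{x\sim p_{\x}}\KL\!\left(q(\cdot)\,\|\,p_{\theta}(\cdot\mid x)\right),
\end{equation*}
where $\mathcal{F}(q,\theta):=\E_{x\sim p_{\x}}\E_{z\sim q}[\log p_{\theta}(x,z) - \log q(z)]$ is the evidence lower bound. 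Since the KL term is nonnegative and vanishes exactly at $q(\cdot)=p_{\theta}(\cdot\mid x)$, the E-step choice $q=p_{\theta_0}(\cdot\mid x)$ makes the bound tight at $\theta_0$, and the following M-step $\argmax_{\theta\in\Mcal}\mathcal{F}(q,\theta)$ is precisely the maximization defining $\theta_{\mathrm{ML}_{\theta_0}}$, because the entropy term $-\log q(z)$ is constant in $\theta$.

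First I would establish the monotonicity inequality
\begin{equation*}
\E_{x\sim p_{\x}}[\log p_{\theta_{\mathrm{ML}_{\theta_0}}}(x)] \geq \mathcal{F}(q,\theta_{\mathrm{ML}_{\theta_0}}) \geq \mathcal{F}(q,\theta_0) = \E_{x\sim p_{\x}}[\log p_{\theta_0}(x)],
\end{equation*}
so a single EM step never decreases the very objective that $\theta_{\mathrm{ML}}$ globally maximizes. Next I would show that the fixed points of the EM map coincide with the stationary points of $\theta\mapsto\E_{x\sim p_{\x}}[\log p_{\theta}(x)]$. The KL term, viewed as a function of $\theta$, attains its minimum (value zero) at $\theta=\theta_0$ and hence has vanishing gradient there; by the decomposition above, $\nabla_{\theta}\E[\log p_{\theta}(x)]$ and $\nabla_{\theta}\mathcal{F}(q,\theta)$ therefore agree at $\theta_0$, so $\theta_0$ is a stationary point of the marginal likelihood if and only if it is an EM fixed point. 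This already justifies the proxy qualitatively: the one-step estimator moves uphill toward, and is stationary exactly at, the maximizers of the true log-likelihood.

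To make \emph{good proxy} quantitative I would linearize the map $\theta_0\mapsto\theta_{\mathrm{ML}_{\theta_0}}$ around $\theta_{\mathrm{ML}}$. Standard EM theory yields $\theta_{\mathrm{ML}_{\theta_0}}-\theta_{\mathrm{ML}} = M(\theta_{\mathrm{ML}})(\theta_0-\theta_{\mathrm{ML}}) + O(\norm{\theta_0-\theta_{\mathrm{ML}}}^2)$, where the rate matrix $M$ equals the fraction of missing information $I_{\mathrm{missing}}I_{\mathrm{complete}}^{-1}$, the ratio between the information lost by observing only $x$ and the complete-data information. A single step thus contracts the distance to $\theta_{\mathrm{ML}}$ by the spectral radius of $M$, and in the well-separated-components regime already invoked for the Fisher matrix in Section~\ref{sec:gaussian-case} this fraction is small, so $\theta_{\mathrm{ML}_{\theta_0}}$ lands close to $\theta_{\mathrm{ML}}$ whenever $\theta_0$ is. The main obstacle I expect is controlling this rate matrix: bounding the fraction of missing information requires regularity of the complete- and incomplete-data likelihoods together with a separation condition on the mixture components, and the bound is only local, so it relies on the standing assumption that the anchor $\theta_0$ is already close to the true maximizer.
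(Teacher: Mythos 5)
Your proof is correct, and its first half is exactly the paper's argument: the paper also drops the $\theta$-independent term $\log p_{\theta_0}(z\mid x)$, applies Jensen to show the resulting objective lower-bounds $\log p_{\theta}(x)$, and identifies the gap as $\KL\left(p_{\theta_0}(z\mid x)\,\|\,p_{\theta}(z\mid x)\right)$ --- which is precisely your variational/ELBO decomposition with $q=p_{\theta_0}(\cdot\mid x)$. The paper stops there, leaving ``good proxy'' to rest on the informal observation that this KL gap vanishes at $\theta=\theta_0$ and is small nearby. You go considerably further: the monotone-ascent chain showing one EM step never decreases the marginal log-likelihood, the identification of EM fixed points with stationary points of $\theta\mapsto\E[\log p_{\theta}(x)]$, and the linearization $\theta_{\mathrm{ML}_{\theta_0}}-\theta_{\mathrm{ML}} \approx M(\theta_{\mathrm{ML}})(\theta_0-\theta_{\mathrm{ML}})$ with $M$ the fraction of missing information. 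These additions are standard and correct EM theory and make the lemma's claim genuinely quantitative, which the paper's version does not; the price is that they require regularity and separation assumptions (to control the spectral radius of $M$) that the paper never states, and the contraction statement is only local in $\theta_0$. As a verification of the paper's lemma as written, your opening decomposition already suffices; the rest is a strengthening rather than a divergence.
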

\begin{proof}
We have that
\begin{equation}
    \argmax_{\theta\in\Mcal}\E_{z|x,\theta_0}[\log p_{\theta}(x,z)]  = \argmax_{\theta\in\Mcal}\E_{z|x,\theta_0}[\log p_{\theta}(x,z) - \log p_{\theta_0}(z|x)],
\end{equation}
as the term $\log p_{\theta_0}(z|x)$ does not depend on $\theta$.
Thanks to Jensen's inequality we obtain that 
\begin{equation}
    \E_{z|x,\theta_0}\left[\log \frac{p_{\theta}(x,z)}{p_{\theta_0}(z|x)}\right] \leq 
    \log \left( \E_{z|x,\theta_0}\left[ \frac{p_{\theta}(x, z)}{p_{\theta_0}(z|x)} \right]\right) =
    \log p_{\theta}(x).
\end{equation}
Moreover, we have that the difference is bounded by
\begin{equation}
\begin{split}
    \log p_{\theta}(x) - \E_{z|x,\theta_0}\left[\log \frac{p_{\theta}(x,z)}{p_{\theta_0}(z|x)}\right] &=
    \int \left(\log p_{\theta}(x) - \log\frac{p_{\theta}(x,z)}{p_{\theta_0}(z|x)} \right) p_{\theta_0}(z|x) \,dz \\
    &= \int \left( \log\frac{p_{\theta_0}(z|x)}{p_{\theta}(z|x)} \right) p_{\theta_0}(z|x) \,dz \\
    &= KL\left(p_{\theta_0}(z|x) \ \| \ p_{\theta}(z|x) \right)
\end{split}
\end{equation}
\end{proof}

\subsection{Particular case for the manifold of a single Gaussian}
\begin{lemma} 
Consider as the family of distributions the family of Gaussians $\Ncal(\mu, I)$, parametrized by $\theta = \mu$.
Then, we have $D(p_\theta\| p_{\theta'}) = \frac{\norm{\psi(\theta) - \psi(\theta')}^2}{2}$.
\end{lemma}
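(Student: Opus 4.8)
The plan is to observe that for the Gaussian location family with identity covariance, the embedding $\psi=\varphi_{\theta_0}$ collapses to a simple affine map and the KL divergence is \emph{exactly} quadratic in the mean difference, so the local approximation of Theorem~\ref{thm:closeness} becomes an exact identity with a vanishing remainder. Accordingly, I would prove the two sides equal by computing each in closed form and matching them.

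First I would compute the Fisher information matrix for $\Mcal=\{\Ncal(\mu,I)\}$ parametrized by $\theta=\mu$. Writing $\log p_\theta(x)=-\tfrac12\norm{x-\theta}^2+\text{const}$, we get $\nabla_\theta\log p_\theta(x)=x-\theta$, so that
\[
I(\theta_0)=\E_{x\sim p_{\theta_0}}\!\left[(x-\theta_0)(x-\theta_0)^\top\right]=I,
\]
the $d\times d$ identity, independently of the anchor $\theta_0$. Substituting into the definition of the FIE in~\eqref{eq:fie_proba} yields $\psi(\theta)=\varphi_{\theta_0}(p_\theta)=I(\theta_0)^{\nicefrac12}(\theta-\theta_0)=\theta-\theta_0$, whence the shift by $\theta_0$ cancels in the difference: $\psi(\theta)-\psi(\theta')=\theta-\theta'$, and therefore $\tfrac12\norm{\psi(\theta)-\psi(\theta')}^2=\tfrac12\norm{\theta-\theta'}^2$.

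Next I would evaluate the left-hand side directly. Plugging the two densities into the definition of $D$ and expanding the quadratic in the exponent, the normalization constants and the covariance-dependent terms cancel, since both components share the identity covariance, leaving
\[
\KL(p_\theta\|p_{\theta'})=\tfrac12\norm{\theta-\theta'}^2.
\]
Comparing this with the expression obtained for $\tfrac12\norm{\psi(\theta)-\psi(\theta')}^2$ closes the lemma.

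The only conceptual content worth emphasizing is that \emph{no} error term appears: this is precisely the special instance in which the $O(\Delta^2)$ remainder of Theorem~\ref{thm:closeness} vanishes identically, because the log-likelihood is quadratic in $\theta$, so its Hessian (the Fisher information) is constant and all third- and higher-order derivatives of the KL are zero. There is no genuine obstacle here; the main thing to watch is keeping the direction of the KL consistent with the subtraction order in $\psi(\theta)-\psi(\theta')$, which is in any case immaterial since both expressions are symmetric in $\theta$ and $\theta'$.
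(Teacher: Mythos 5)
Your proof is correct and follows essentially the same route as the paper's: compute the Fisher information matrix (the identity), observe that the embedding reduces to an affine map so that $\psi(\theta)-\psi(\theta')=\theta-\theta'$, and match this against the closed-form KL divergence $\tfrac12\norm{\mu-\mu'}^2$ between isotropic Gaussians. Your write-up is in fact slightly more careful than the paper's (you derive the Fisher information rather than asserting it, and you correctly keep the factor $\tfrac12$ that the paper's final line drops), and your remark that the $O(\Delta^2)$ remainder vanishes identically here is a worthwhile observation, but the underlying argument is the same.
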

\begin{proof}
We have $D(p_\theta \| p_\theta') = \frac{\norm{\mu - \mu'}^2}{2}$.
Moreover, we have that the Fisher information matrix is the identity, i.e. $I(\theta) = I$.
Then $\psi(\theta) = \mu$, and $\norm{\psi(\theta) - \psi(\theta')}^2 = \norm{\mu - \mu'}^2$, so $D(p_\theta \| p_\theta') = \norm{\psi(\theta) - \psi(\theta')}^2$.
\end{proof}

\section{Variations on the E step for Fisher information embedding}\label{sec:supp_estep}
In particular, the E-step can be solved in several ways, both analytically and not, depending on the problem at hand:
\begin{itemize}
    \item Without further constraints on $\boldsymbol{\alpha}$, one can show that the optimal $a_{ij}$'s are
    \begin{equation*}
        \alpha_{ij}=p_{\theta}(z_i=j|x_i)=\frac{\Ncal(x_i,w_j,\Sigma_j)}{\sum_{l=1}^p \Ncal(x_i,w_l,\Sigma_l)}.
    \end{equation*}
    \item If one adds an hard constraint of the form $\sum_i a_{ij} = \nicefrac{n}{p}$, e.g. to avoid collapsed solutions, then the minimization problem %
    is an entropy-regularized optimal transport problem, which can be solved efficiently by the Sinkhorn–Knopp algorithm.
    Indeed, the problem can be formulated as 
    \begin{equation*}
        \min_{\boldsymbol{\alpha}\in\Pi'} \sum_{i=1}^n \sum_{j=1}^p \alpha_{ij}C_{ij} - H(\boldsymbol{\alpha}),
    \end{equation*}
    with $C_{ij} = w_i \log p_{\theta}(x_i,z_i=j)$ and $\Pi':=\{\alpha_{ij}\geq 0,\sum_{j}\alpha_{ij}=1, \sum_i a_{ij} = \nicefrac{n}{p}\}$.
    \item In many situations, adding a hard constraint on the rows of $\boldsymbol{\alpha}$ is too strong of a requirement. Instead, one can add a regularization term in the objective function to skew the optimal solution towards the desired marginal. The E-step can be then formulated as an unbalanced optimal transport (UOT) problem \cite{} as follows:
    \begin{equation*}
    \begin{split}
        \min_{\boldsymbol{\alpha}\in \mathbb{R}^{n\times p}_{+}} \sum_{i=1}^n \sum_{j=1}^p & \alpha_{ij}C_{ij} + H(\boldsymbol{\alpha}) + \tau_1 \KL(\boldsymbol{\alpha}\boldsymbol{1}_n \| \boldsymbol{1}_p ) + \tau_2 \KL(\boldsymbol{\alpha}^\top\boldsymbol{1}_p \| \nicefrac{n}{p} \boldsymbol{1}_n ),
    \end{split}
    \end{equation*}
    with $\tau_1 \rightarrow \infty$ to enforce that $\sum_j a_{ij}=1$. Similarly to OT, this problem can be solved efficiently with a variant of the Sinkhorn algorithm.
\end{itemize}

\section{Experimental details and additional results}\label{sec:supp_exp}
Here, we provide experimental details and additional experimental results.

\subsection{Evaluation on synthetic datasets}
\begin{figure*}[t]
\centering
     \begin{subfigure}
         \centering
         \includegraphics[width=0.32\textwidth]{sim/exp0.pdf}
     \end{subfigure}
     \hfill
     \begin{subfigure}
         \centering
         \includegraphics[width=0.32\textwidth]{sim/exp1.pdf}
     \end{subfigure}
     \hfill
     \begin{subfigure}
         \centering
         \includegraphics[width=0.32\textwidth]{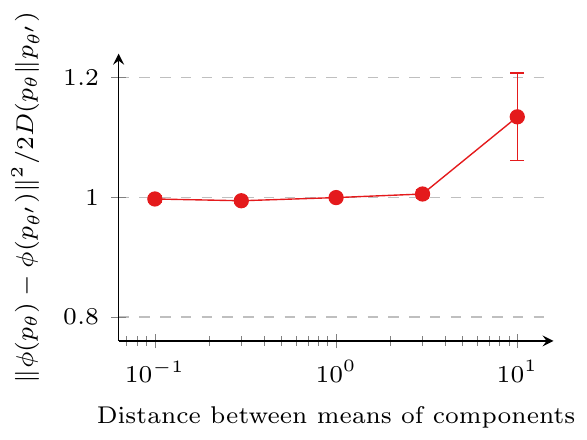}
     \end{subfigure}
     \caption{Simulation study on the ratio between the KL divergence between two distributions and the squared distance between their embeddings, varying some parameters of the underlying distributions.} \label{fig:appendix:sim1}
\end{figure*}

In the first set of experiments, we keep the parameters of the embedding method fixed and we vary the underlying distributions, to understand in which cases the fisher embedding approximates well the KL divergence. 
In particular, we use as the family of distributions a family of Gaussian mixtures with $k =10$ components, and with each component having identity covariance. 
Moreover, we fix the number of EM iterations to $T=10$. 

We then vary the underlying data-generating distributions as follows.
\begin{itemize}
\item
  In the first simulation we have $p_1$ as a mixture of three
  Gaussians $\mathcal{N}(\mu_{1,j}, I)$, with
  $\mu_{1,1} = (-5, -2)$, $\mu_{1,1} = (-5, 0)$,
  $\mu_{1,1} = (-5, 2)$. Then we vary $p_2$ as a mixture of $\kappa$
  Gaussians $\mathcal{N}(\mu_{2,j}, I)$, with means evenly spaced
  between $(5,-2)$ and $(5, 2)$, for $\kappa \in [1, 10]$.
\item
  In the second simulation, we have that both the distributions are
  mixtures of three Gaussians $\mathcal{N}(\mu_{i,j}, I)$, and we vary
  the distance between the means of the two distributions, keeping the
  distance between components of the same distribution fixed. In
  particular, we let $\mu_{1,1} = (-5, -2)$, $\mu_{1,1} = (-5, 0)$,
  $\mu_{1,1} = (-5, 2)$ and $\mu_{2,1} = (-5+d, -2)$,
  $\mu_{2,1} = (-5+d, 0)$, $\mu_{2,1} = (-5+d, 2)$, for
  $d \in [1, 100]$.
\item
  In the third simulation, we again have that both the distributions are
  mixtures of three Gaussians $\mathcal{N}(\mu_{i,j}, I)$, but this
  time we vary the distance between the components of each distribution,
  keeping the distance between the means of the two distributions fixed.
  In paricular, we let $\mu_{1,1} = (-5, -d)$,
  $\mu_{1,1} = (-5, 0)$, $\mu_{1,1} = (-5, d)$ and
  $\mu_{2,1} = (5, -d)$, $\mu_{2,1} = (5, 0)$,
  $\mu_{2,1} = (5, d)$, for $d \in [0.1, 10]$.
\end{itemize}

In the second set of experiments we keep the underlying distributions fixed and the vary the parameters of our embedding method. 
In particular, we let both distributions be a
  mixture of three Gaussians $\mathcal{N}(\mu_{1,j}, I)$ 
  with $\mu_{1,1} = (-5, -2)$,
  $\mu_{1,1} = (-5, 0)$, $\mu_{1,1} = (-5, 2)$ and
  $\mu_{2,1} = (5, -2)$, $\mu_{2,1} = (5, 0)$,
  $\mu_{2,1} = (5, 2)$.

We then vary the number $k$ of components in the parametric family of Gaussian mixtures for the maximum likelihood estimation in $[1, 100]$, and the number $T$ of EM iterations in $[0,10]$. 

\subsection{Evaluation on real-world datasets}

\subsubsection{Computation Details}
All experiments were performed on a shared GPU and CPU cluster equipped with GTX1080 and TITAN RTX. About 20 of these GPUs were used simultaneously, and the total computational cost of this research project was about 500 GPU hours.

\subsubsection{Datasets}
The datasets that we use for classification are classical ones. The statistics for each dataset is summarized in Table~\ref{app:tab:dataset}

\begin{table}[h]
    \centering
    \begin{tabular}{lccc}\toprule
         Dataset & Number of nodes & Number of edges & Number of classes \\ \midrule
         Cora & 2708 & 5429 & 7  \\ 
         Citeseer & 3327 & 4732 & 6 \\
         Pubmed & 19717 & 44338 & 3 \\
         Reddit & 232965 & 114615892 & 41\\
         ogbn-arxiv & 169343 & 1166243 & 40 \\
         ogbn-products & 2449029 & 61859140 & 47\\
         \bottomrule
    \end{tabular}
    \caption{Summary of considered datasets.}
    \label{app:tab:dataset}
\end{table}

\subsubsection{Baseline results}
We report the results from the following papers.
For MLP, for the Cora, Citeseer and Pubmed datasets we report the results from \cite{velickovic2018graph}, for the ogbn-arxiv and ogbn-products datasets we report the results from \cite{hu2020ogb1}.
For Label Propagation, for the Cora, Citeseer and Pubmed datasets we report the results from \cite{velickovic2018graph}, for the ogbn-arxiv and ogbn-products datasets we report the results from \cite{hu2020ogb1}.
For CGN, for the Cora, Citeseer and Pubmed datasets we report the results from \cite{velickovic2018graph} and for the ogbn-arxiv and ogbn-products datasets we report the results from \cite{brody2022how}.
For GraphSAGE, for the Reddit dataset we report the results from \cite{hamilton2017inductive} and for the ogbn-arxiv and ogbn-products datasets we report the results from \cite{brody2022how}.
For GAT and GATv2, for the ogbn-arxiv and ogbn-products datasets we report the results from \cite{brody2022how} and for other datasets we report the results from our reimplementation.

\subsubsection{Hyperparameter Choices and Reproducibility}
\paragraph{Hyperparameter choice.}
In general, we perform a very limited hyperparameter search to produce the results in Table~\ref{tab:classification}. The hyperparameters for training FIE models on different datasets are summarized in Table~\ref{tab:supp_hyperparameter_unsup} and Table~\ref{tab:supp_hyperparameter_sup}, respectively for unsupervised and supervised modes of FIE. For supervised learning tasks, a dropout with rate equal to 0.5 is used for training supervised embeddings of FIE. For large supervised datasets (Reddit, ogbn-arxiv, and ogbn-products), we use a LightGBM classifier~\citep{ke2017lightgbm}, the hyperparameters of which are automatically tuned by FLAML~\citep{wang2021flaml}.

\paragraph{Optimization.}
All our models are trained with the Adam optimizer~\citep{kingma2015adam} with a fixed learning rate equal to 0.001.

\begin{table}[h]
    \centering
    \begin{tabular}{lcc}\toprule
         Hyperparameter & \{Cora, Citeseer, Pubmed\} &  \{Reddit, ogbn-arxiv, ogbn-products\} \\ \midrule
         Number of layers & [2,3,4] & [3, 4, 5] \\
         Hidden dimensions & [128, 256, 512] & [128, 256, 512] \\
         Number of mixture components & [1, 2, 4, 8] & [1, 2, 4, 8]
         \\ \bottomrule
    \end{tabular}
    \caption{Hyperparameters for unsupervised mode of FIE.}
    \label{tab:supp_hyperparameter_unsup}
\end{table}

\begin{table}[h]
    \centering
    \begin{tabular}{lcc}\toprule
         Hyperparameter & \{Cora, Citeseer, Pubmed\} &  \{Reddit, ogbn-arxiv, ogbn-products\} \\ \midrule
         Number of layers & [2,3,4] & [3, 4, 5] \\
         Hidden dimensions & [16, 32, 64] & [128, 256] \\
         Number of mixture components & [1, 2, 4, 8] & [1, 2, 4, 8]
         \\ \bottomrule
    \end{tabular}
    \caption{Hyperparameters for supervised mode of FIE.}
    \label{tab:supp_hyperparameter_sup}
\end{table}

\end{document}